\newtheorem{theorem}{\bf Theorem}
\newtheorem{lemma}{\bf Lemma}
\newtheorem{definition}{\bf Definition}
\newtheorem{remark}{\bf Remark}
\newcommand{\bea}{\begin{eqnarray}}
\newcommand{\eea}{\end{eqnarray}}
\renewcommand{\P}{\mathbb{P}}
\newcommand{\E}{\mathbb{E}}
\newcommand{\R}{\mathcal R}
\newcommand{\Pp}{\mathbb P}
\newcommand{\equaref}[1]{(\ref{eq:#1})}
\newcommand{\enne}
{
\hbox{I$\!$N}
}
\newcommand{\diff}{{\rm\,d}}
\newcommand{\Prob}
{
{\mathbb P}
}
\newcommand{\tgifeps}[3]{
\begin{figure}[htb]
\centering
\includegraphics[width=#1cm]{#2.eps}
\caption{#3\label{fig:#2}}
\vspace{-1mm}
\end{figure}
}
\title{On the Emergence of Shortest Paths by Reinforced \\
Random Walks}
\author{Daniel R. Figueiredo}
\affil{Computer Science and Syst. Eng. Dept.\\
Federal University of Rio de Janeiro (UFRJ), Brazil}
\author{Michele Garetto}
\affil{Computer Science Department\\
University of Torino, Italy}
\date{}                     
\begin{document}

\maketitle

\begin{abstract}
The co-evolution between network structure and functional performance is a fundamental and challenging 
problem whose complexity emerges from the intrinsic interdependent nature of structure and function.
Within this context, we investigate the interplay between the efficiency of network navigation (i.e., path 
lengths) and network structure (i.e., edge weights). We propose a simple and tractable model based on 
iterative biased random walks where edge weights increase over time as function of the traversed path length. 
Under mild assumptions, we prove that biased random walks will eventually only traverse shortest 
paths in their journey towards the destination. We further characterize the transient regime proving 
that the probability to traverse non-shortest paths decays according to a power-law. We also highlight various 
properties in this dynamic, such as the trade-off between exploration and convergence,  
and preservation of initial network plasticity. We believe the proposed model and results can be of interest to 
various domains where biased random walks and de-centralized navigation have been applied. 
\end{abstract}

\section{Introduction}\label{sec:introduction}

The interplay between network structure (nodes, edges, weights) and 
network function (high level features enabled by the network) is a fundamental and challenging 
problem present in a myriad of systems ranging from biology to economics and sociology.
In many complex systems network structure and network function co-evolve interdependently: 
while network structure constraints functional performance, the drive for functional efficiency
pressures the network structure to change over time. 
Within this tussle, {\it network activity} (i.e., basic background processes running
on the network) plays a key role in tying function and structure: 
in one hand, function execution often requires network activity, while in the other 
hand network structure often constraints network activity.

Given the complexity of co-evolution, simple and tractable models are often used to understand 
and reveal interesting phenomena. In this paper, we focus on {\em network navigation}, proposing 
and analyzing a simple model that captures the interplay between function and structure.
Our case-study embodies repetition, plasticity, randomization, valuation and memory which are 
key ingredients for evolution: repetition and memory allow for learning; plasticity and randomization
for exploring new possibilities; valuation for comparing alternatives. 
Moreover, in our case-study co-evolution is enabled by a single and simple network 
activity process: {\em biased random walks}, where time-varying edge weights play the role of memory. 

Network navigation (also known as routing) refers to the problem of finding short 
paths in networks and has been widely studied due to its importance in various contexts. Efficient 
network navigation can be achieved by running centralized or distributed algorithms. Alternatively, 
it can also be achieved when running simple greedy algorithms over carefully crafted 
network topologies. But can efficient navigation emerge without computational
resources and/or specifically tailored topologies? 

A key contribution of our work is to answer affirmatively the above question by means of 
Theorem~\ref{maintheo}, which states that under mild conditions 
efficient network navigation always emerges through the repetition of extremely 
simple network activity.   
More clearly, a biased random walk will eventually only take paths of minimum length, 
independently of network structure and initial weight assignment. Beyond 
its long term behavior, we also characterize the system transient regime, revealing interesting 
properties such as the power-law decay of longer paths, and the (practical) preservation 
of initial plasticity on edges far from ones on the shortest paths. 
The building block for establishing the theoretical results of this paper
is the theory of P{\'o}lya urns, applied here by considering a network of urns.

We believe the proposed model and its analysis could be of interest to various domains 
where some form of network navigation is present and where random walks are used as the 
underlying network activity, 
such as computer networking~\cite{passarella2012survey,huang2006data,leibnitz2006biologically}, 
animal movement in biology~\cite{codling2008random,smouse2010stochastic}, 
memory recovery in the brain~\cite{sadeh2015emergence,abbott2012human,saerens2009randomized}. 
Moreover, our results can enrich existing theories such as
Ant Colony Optimization (ACO) meta-heuristic~\cite{Dorigo:2004,dorigo2005ant}, 
Reinforcement Learning (RL) theory~\cite{sutton1998reinforcement}, and Edge Reinforced Random 
Walks (ERRW) theory~\cite{davis1990reinforced,pemantle2007survey} -- see related work in Section \ref{sec:related}.


\section{Model}
\label{sec:model}

We consider an arbitrary (fixed) network ${\cal G}=({\cal V},{\cal E})$, 
where $\cal V$ is a set of vertices and $\cal E$ is a set of directed edges among the vertices. 
We associate a weight (a positive real value) $w_{i,j}$ to every directed edge $(i,j) \in {\cal E}$.
Edge weights provide a convenient and flexible abstraction for structure, specially when 
considering evolution. Finally, a pair of fixed nodes $s$, $d \in  {\cal V}$ 
are chosen to be the source and destination, respectively.  
But how to go from $s$ to $d$? 

We adopt a very simple network activity model to carry out the function of 
navigation: weighted random walks (WRW). 
Specifically, a sequence of random walks, indexed by $n = 1,2,\ldots$, is executed 
on the network one after the other. 
Each WRW starts at $s$ and steps from node to node until it hits $d$. 
At each visited node, the WRW randomly follows an outgoing edge with 
probability proportional to its edge weight. 
We assume that weights on edges remain constant during 
the execution of a single WRW, 
and that decisions taken at different nodes are independent from each other. 

Once the WRW reaches the destination and stops, edge weights are updated, thus 
impacting the behavior of the next WRW in the sequence. 
In particular, edges on the path followed by the WRW are 
rewarded (reinforced) by increasing their weights with a positive amount 
which depends on the length of the path taken (expressed in number of hops).
Let $f:\enne^+ \rightarrow \R^+$ be some positive function of the path length, 
hereinafter called the reward function.   

We consider two different ways in which edges are reinforced: 
\begin{itemize}
   \item {\bf single-reward model}: each edge belonging to the path
   followed by the WRW is rewarded once, according to function $f(\cdot)$.
   \item {\bf multiple-reward model}: each edge belonging to the path
   followed by the WRW is rewarded according to function $f(\cdot)$  
   {\em for each time} the edge was traversed.
\end{itemize}   

Throughout the paper, we will interpret $n$, the number of random walks that 
have gone from $s$ to $d$ as a discrete time step. Thus, by co-evolution of 
the system we actually mean what happens to the network structure (i.e., weights) 
and navigation (i.e., path lengths) as $n \to \infty$.

Let $w_{i,j}[n]$ be the weight on edge $(i,j)$ at time $n$ (right after the 
execution of the $n$-th WRW but before the $(n+1)$-th WRW starts).
Let ${\cal P}_n$ denote the sequence of edges (i.e., the path) traversed by the $n$-th WRW and 
$L_n = |{\cal P}_n|$ the path length (in number of hops)\footnote{In this paper
we assume that the cost to traverse any edge is equal to 1, but results can be 
immediately generalized to the case in which a generic (positive) cost $c_{i,j}$ 
is associated to each edge $(i,j)$.}.

After reaching the destination, the weight of any distinct edge $(i,j)$ 
in ${\cal P}_n$ is updated according to the rule:
\begin{equation}\label{eq:update}
w_{i,j}[n] = w_{i,j}[n-1] + u_{i,j}({\cal P}_n) \cdot f(L_n) 
\end{equation}
where $u_{i,j}({\cal P}_n) = 1$ under the single-reward model, whereas $u_{i,j}({\cal P}_n)$ equals
the number of times edge $(i,j)$ appears in ${\cal P}_n$, under the multiple reward model.
We also allow for the event that the WRW does not reach the destination 
because it \lq gets lost' in a part of the network from which $d$ is no longer
reachable. In this case, we assume that no edge is updated by the WRW who fails to reach $d$.   

Note that our model has the desirable ingredients for co-evolution: edge set $\cal E$ 
and initial weights provide plasticity and WRW provides randomization, which allows 
for exploring alternative paths; edge weights provide memory and the sequence of 
WRW provides repetition, which enables learning; path length taken by WRW provides 
valuation, which allows for comparing alternatives paths. Moreover, note that 
functional performance induces structural changes through network activity as 
navigation (traversed path) changes edge weights, while network structure 
constraints function, as edge weights influence observed path lengths. 
Thus, our model captures the essence of co-evolution. But will efficient 
navigation emerge? In particular, which paths ${\cal P}_n$ are taken 
as $n$ increases? 


\section{Related work}
\label{sec:related}

The problem of finding shortest paths in networks is, of course, a well
understood problem in graph theory and computer science, for which
efficient algorithms are available, both centralized (e.g., Dijkstra)
and distributed (e.g., Bellman-Ford). Our approach follows in the second
category (distributed), as it does not require knowledge of the topology,
however we stress that our goal is not to propose yet another
way to compute shortest paths in network (actually, the convergence 
of our process is slower than that of Bellman-Ford), but to show that
shortest paths can naturally emerge from the repetition of a
simple and oblivious network activity which does not require
computational/memory resources on the nodes. As such, our model
is more tailored to biological systems, rather than technological networks.

The celebrated work of Kleinberg~\cite{klein} was probably the first 
to show that efficient navigation is indeed feasible 
through a simple greedy strategy based solely on local information,
but under the stringent assumption that the network exhibits a very 
particular structure. Greedy algorithms can also lead to 
efficient network navigation under distributed hash tables (DHTs), 
but again this requires the network to exhibit a very particular 
topology~\cite{stoica2003chord}. 

The idea of reinforcing edges along paths followed by random walks 
is surely reminiscent of Ant Colony Optimization (ACO), a 
biologically-inspired meta-heuristic for exploring the solution space of complex
optimization problems which can be reduced to finding good paths through graphs
~\cite{Dorigo:2004,dorigo2005ant}. Although some versions of ACO can be proved to 
converge to the global optimum, their analysis turns out to be complicated and 
mathematically non-rigorous, especially due to {\em pheromone evaporation} (i.e., weights 
on edges decrease in the absence of reinforcement). Moreover, like most meta-heuristics,
it is very difficult to estimate the theoretical speed of convergence. 
In contrast to ACO, our model is simpler and has the modest goal of revealing 
shortest paths in a given network, instead of exploring a solution space.
Moreover, we do not introduce any evaporation, and we exploit totally
different techniques (the theory of P{\'o}lya urns) to establish our results,
including the transient behavior (convergence) of the system.

In Reinforcement Learning (RL), the problem of finding an optimal policy through a random 
environment has also been tackled using Monte Carlo methods that reinforce actions based 
on earned rewards, such as the $\epsilon$-soft policy algorithm~\cite{sutton1998reinforcement}. 
Under a problem formulation with no terminal states and expected discounted rewards, it can be 
rigorously shown that an iterative algorithm converges to the optimal policy~\cite{tsitsiklis2003convergence}. 
However, in general and more applicable scenarios, the problem of convergence to optimal policies 
is still an open question, with most algorithms settling for an approximate solution. 
Although lacking the notion of action set, 
our model is related to RL in the sense that it aims at finding
paths accumulating the minimum cost, through an unknown environment, 
using a Monte-Carlo method. Our convergence results (convergence in probability) and the 
techniques used in the analysis (P{\'o}lya urns) are fundamentally different from what is commonly
found in RL theory, and could thus be useful to tackle problems in this area. 



Edge Reinforcement Random Walks (ERRW) is a mathematical modeling framework consisting 
of a weighted graph where weights evolve over time according to steps taken by a random 
walker~\cite{davis1990reinforced,pemantle2007survey}. 
In ERRW, a single random walk moves around without having any destination 
and without being restarted. Moreover, an edge weight is updated immediately after traversal of the edge,
according to functions based on local information. 
Mathematicians have studied theoretical aspects of ERRW such as the convergence of the network 
structure (relative weights) and the recurrence behavior of the walker (whether it will continue to visit 
every node in the long run, or get trapped in one part). Similarly to our model, a key ingredient 
in the analysis of ERRW is the P{\'o}lya urn model, specially on directed networks. 
In contrast to us, ERRW model was not designed to perform any particular function and thus does not 
have an objective. Our model is substantially different from traditional ERRW, and we believe it could 
suggest a concrete application as well as new directions to theoreticians working on ERRW. 

Animal movement is a widely studied topic in biology to which probabilistic models have 
been applied, including random walk based models~\cite{codling2008random,smouse2010stochastic}. 
In particular, in the context of food foraging, variations of ERRW models have been used to capture 
how animals search and traverse paths to food sources. A key difference in such variations is a 
{\em direction vector}, an information external to the network (but available on all nodes) that 
provides hints to the random walk. Such models have been used to show the emergence of relatively 
short paths to food sources, as empirically observed with real (monitored) animals. 
In contrast, we show that shortest paths (and not just short) can emerge even when external  
information is not available.


Understanding how neurons in the brain connect and fire to yield higher level functions 
like memory and speech is a fundamental problem that has recently received much 
attention and funding~\cite{seung2012connectome,hbp}. 
Within this context, random walk based models have been proposed and 
applied~\cite{saerens2009randomized,abbott2012human} along with models 
where repeated network activity modifies the network structure~\cite{sadeh2015emergence}. 
In particular, the latter work considers a time varying weighted network model under a more 
complex rule (than random walks) for firing neurons to show that the network structure 
can arrange itself to perform better function. We believe our work can provide building blocks 
in this direction since our simple model for a time varying (weighted) network also 
self-organizes to find optimal paths. 

Biased random walks have also been applied to a variety of computer networking 
architectures~\cite{passarella2012survey,huang2006data,leibnitz2006biologically}, with the goal 
of designing self-organizing systems to locate, store, replicate and manage
data in time-varying scenarios. We believe our model and findings could be of interest in 
this area as well. 


\section{Main finding}\label{sec:finding}
Let $L_{\min}$ be the length of the shortest path in graph $\cal G$ 
connecting source node $s$ to destination node $d$.
Denote by ${\cal P}_n$ the path taken by the $n$-th WRW, and 
by $\cal P$ an arbitrary path from $s$ to $d$, of length 
$L_{\cal P}$.

\begin{theorem}\label{maintheo}
Given a weighted directed graph $\cal G$,
a fixed source-destination pair $s$-$d$ (such that $d$ is reachable from $s$),
an initial weight assignment (such that all initial weights are positive),
consider an arbitrary path $\cal P$ from $s$ to $d$.
Under both the single-reward model and the multiple-reward model,
provided that the reward function $f(\cdot)$ is a strictly decreasing
function of the path length, as the number $n$ of random walks performed on the graph
tends to infinity, we have:
\begin{eqnarray*}
\lim_{n \rightarrow \infty} \Prob\{{\cal P}_n = {\cal P}\} = 
\begin{cases} c({\cal P}), & \mbox{if } L_{\cal P} = L_{\min} \\
0 , & \mbox{if } L_{\cal P} > L_{\min} 
\end{cases}
\end{eqnarray*}
where $c({\cal P})$ is a random variable taking values in $(0,1]$, 
that depends on the specific shortest path $\cal P$.
\end{theorem}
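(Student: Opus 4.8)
The plan is to recast the dynamics as a \emph{network of interacting generalized P\'olya urns} driven by one principle: at every node an out-edge lying on a shortest route to $d$ is eventually reinforced more than an out-edge that is not. For a vertex $v$ from which $d$ is reachable let $\ell(v)$ be the length of a shortest $v$-$d$ path (so $\ell(s)=L_{\min}$, $\ell(d)=0$); since $\ell(w)\ge\ell(v)-1$ for every out-edge $(v,w)$, call $(v,w)$ \emph{optimal} if $\ell(w)=\ell(v)-1$ and \emph{non-optimal} (then $\ell(w)\ge\ell(v)$) otherwise. View each $v$ as an urn whose colours are its out-edges: it is ``activated'' at the random steps $n$ at which the $n$-th WRW visits $v$, and the colour of the traversed out-edge is then reinforced by $u_{i,j}({\cal P}_n)f(L_n)$. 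Two reductions are used throughout. (i) WRWs that fail to reach $d$ leave all weights unchanged and may be discarded; since $d$ is reachable from $s$ and all weights are positive, a.s.\ infinitely many WRWs reach $d$. (ii) The choices made at distinct nodes within one WRW are conditionally independent given the current weights, so $\Prob\{{\cal P}_n={\cal P}\mid{\cal F}_{n-1}\}=\prod_{v\in{\cal P},\,v\neq d}p_{v,e({\cal P},v)}[n-1]$, where $p_{v,e}[m]$ is the probability of leaving $v$ by $e$ under the weights at time $m$ and $e({\cal P},v)$ the out-edge of $v$ used by ${\cal P}$ (the product being over distinct vertices, as a shortest path is simple).

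The engine is a lemma on a two-colour generalized P\'olya urn with \emph{unequal, random reinforcements}: if ``good'' draws are reinforced by adapted amounts $\alpha_t$ and ``bad'' draws by $\beta_t$, with $\alpha_t,\beta_t$ bounded and $\limsup_t\beta_t<\liminf_t\alpha_t$ a.s., then the probability of a bad draw tends to $0$ a.s., even when the urn is activated only along a random infinite subsequence of steps. I would prove this by a stochastic-approximation argument: with $T_t$ the total mass after $t$ activations and $p_t$ the bad-fraction, the conditional drift of $p_t$ is of order $p_t(1-p_t)(\beta_t-\alpha_t)/T_t$, eventually at most $-\gamma\, p_t(1-p_t)/T_t$ for some $\gamma>0$; since $T_t$ grows linearly in the number of activations, $\sum_t 1/T_t=\infty$, and the Robbins--Monro criterion then forces $p_t\to0$ ($0$ being the stable, $1$ the unstable, fixed point). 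A companion observation is that when all of an urn's draws receive the \emph{same} mass $c>0$ the composition is a balanced P\'olya urn, whose vector of fractions converges a.s.\ to a Dirichlet-type limit supported on the open simplex, and this persists under random subsampling and vanishingly small multiplicative perturbations of the masses.

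With these tools I run an induction over the layers $\ell(v)=0,1,2,\dots$, restricted to vertices $v$ lying on some shortest $s$-$d$ path (only these are visited by a well-behaved walk), to establish: (a) $b_v[n]:=W_v^{\mathrm{bad}}[n]/W_v^{\mathrm{tot}}[n]\to0$ a.s.; (b) the fractions $(w_e[n]/W_v^{\mathrm{opt}}[n])_{e\text{ optimal at }v}$ converge a.s.\ to a random probability vector with a.s.\ strictly positive entries, and $v$ is visited infinitely often. The base case $v=d$ is vacuous. For the step at $v$ with $\ell(v)=k$: leaving $v$ by an optimal edge sends the walk to layer $k-1$, from where (by (a) below layer $k$, whose optimal successors are again shortest-path vertices) it proceeds optimally to $d$ in exactly $\ell(v)$ more steps with probability $\to1$, so the total length is (length of the $s$-to-$v$ prefix)$+\ell(v)$; leaving by a non-optimal edge and later reaching $d$ yields total length at least (that prefix length)$+\ell(v)+1$. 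By reduction (ii) the prefix-length law is the same conditionally on either choice at $v$, so the conditional-expected reinforcement of optimal edges at $v$ exceeds that of non-optimal edges at $v$ by at least $f(p+\ell(v))-f(p+\ell(v)+1)>0$, up to an $o(1)$ correction from the vanishing chance of misbehaving past $v$; the urn lemma then gives (a) at $v$. For (b), the restriction of the urn at $v$ to its optimal colours is, asymptotically, a balanced P\'olya urn with common mass $f(L_{\min})$, so its composition converges a.s.\ to a strictly positive random vector, and the positivity propagates ``$v$ visited i.o.'' down the shortest-path DAG via the conditional Borel--Cantelli lemma.

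Finally, (a) gives $\Prob\{{\cal P}_n\text{ is not a shortest path}\}\le\sum_{v\text{ on a shortest }s\text{-}d\text{ path}}\E[b_v[n]]\to0$, hence $\Prob\{{\cal P}_n={\cal P}\}\to0$ when $L_{\cal P}>L_{\min}$; and for a shortest ${\cal P}$, (a) and (b) give $p_{v,e({\cal P},v)}[n]\to\xi_{e({\cal P},v)}>0$ a.s.\ at every $v\in{\cal P}$, so by reduction (ii) $\Prob\{{\cal P}_n={\cal P}\mid{\cal F}_{n-1}\}\to c({\cal P}):=\prod_{v\in{\cal P},\,v\neq d}\xi_{e({\cal P},v)}$, a random variable in $(0,1]$ (equal to $1$ exactly when ${\cal P}$ is the unique shortest path). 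I expect the main obstacle to be making the urn argument robust to the fact that $f(L_n)$ is a \emph{global} functional of the entire traversed path rather than a quantity local to the urn --- so the $\alpha_t,\beta_t$ genuinely fluctuate and one must use the perturbed, subsampled form of the urn lemma --- together with the bookkeeping that breaks the apparent circularity between ``$v$ visited often'', ``$b_v[n]\to0$'' and ``optimal fractions converge'' across layers: concretely, showing that the length of the $s$-to-$v$ portion of the walk stays tight as $n\to\infty$ (e.g.\ via a uniform bound on $\E[L_n]$, which is plausible because the reward is decreasing in length), since it is this tightness that licenses using the strict-monotonicity gap $f(p+\ell(v))-f(p+\ell(v)+1)$ uniformly. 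The routine but fiddly remainder --- WRWs that cycle or never reach $d$, and the passage between the single- and multiple-reward models --- only contributes bounded, eventually negligible corrections.
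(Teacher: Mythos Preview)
Your overall architecture---one generalized P\'olya urn per vertex, induction over the layers $\ell(v)$ from $d$ back toward $s$---is exactly the paper's. The substantive difference, and the gap, is in the engine lemma. Your two-colour lemma needs $\limsup_t\beta_t<\liminf_t\alpha_t$, and you correctly diagnose that verifying this requires the $s$-to-$v$ prefix length to stay tight. But the prefix runs through vertices with $\ell(\cdot)>\ell(v)$, which your induction has \emph{not yet} treated, and your proposed cure (a uniform bound on $\E[L_n]$) is essentially the theorem's conclusion---so the argument is circular as written. The paper avoids this entirely: rather than a uniform gap across $t$, it conditions on the accumulated prefix $\ell_{n+1}$ and shows $\E[Z_{n+1}\mid{\cal F}_n,\ell_{n+1}]\le Z_n$ for \emph{every} value of $\ell_{n+1}$. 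No control on the prefix distribution is required; Doob's theorem then gives convergence of $Z_n$, and the limit is identified as $0$ via Poissonization (comparing exponential growth rates of the two colours in continuous time), not via a Robbins--Monro drift bound. If you want to rescue the stochastic-approximation route, the right reformulation is to condition the drift on the prefix---then $\beta_t-\alpha_t<0$ holds pathwise at each activation---rather than to seek a uniform separation across different activations.

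A second gap: you wave off cycles and the multiple-reward model as ``bounded, eventually negligible corrections''. This is in fact where most of the paper's technical work sits (Lemma~\ref{lem2} and Appendix~\ref{app:compl}). When the walk can return to $v$ before reaching $d$, one bad draw may be reinforced many times within a single WRW, and the worst adversary for the $\alpha^*$-edge is a self-loop with an arbitrary escape probability $q$; establishing the supermartingale inequality across the geometric number of returns, for every $q\in[0,1]$, is a genuine computation, not an $o(1)$ perturbation of the acyclic case. Your one-line drift formula does not accommodate this bookkeeping.
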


The above theorem essentially says that {\em all} shortest paths
are taken with non-vanishing probability, while {\em all} 
non-shortest paths are taken with vanishing probability, 
as $n \rightarrow \infty$. Note however that the probability 
that a specific shortest path is taken is a random variable, 
in the sense that it depends on the \lq system run' (system sample path). 

\begin{remark}[]
The asymptotic property stated in Theorem \ref{maintheo} is very robust,
as it holds for any directed graph, any strictly decreasing function $f(\cdot)$, 
and any (valid) initial weights on the edges. Note instead that the (asymptotic) 
distribution of $c({\cal P})$, for a given shortest path $\cal P$,
as well as the convergence rate to it, depends strongly on the update function $f(\cdot)$,
on the graph structure, and on the initial conditions on the edges.
\end{remark} 

\begin{remark}[]
We will see in the proof of of Theorem \ref{maintheo} that the assumption 
of having a strictly decreasing function $f(\cdot)$ can be partially relaxed, 
allowing the reward function to be non-increasing for
$L > L_{\min}$. 
\end{remark} 

\section{Preliminaries}\label{sec:prel}

\subsection{Definitions}\label{subsec:defi}
The following definitions for nodes and edges play a central role in our analysis.
\begin{definition}[\bf decision point]
A decision point is a node $i \in {\cal V}$, reachable by $s$, that has more than 
one outgoing edge that can reach $d$.
\end{definition}
\begin{remark}[]
Clearly, we can restrict our attention to nodes that are decision points, 
since all other nodes are either never reached by random walks originating in $s$, 
have zero or one outgoing edge (having no influence on the random walk behavior), 
or their outgoing edges are never reinforced since the destination cannot be reached from them. 
\end{remark} 
\begin{definition}[{\bf $\alpha$-edge and $\beta$-edge}] 
An outgoing edge of decision point $i$ is called an $\alpha$-edge 
if it belongs to some shortest path from $i$ to $d$, whereas it is called a $\beta$-edge 
if it does not belong to any shortest path from $i$ to $d$.
\end{definition}
Note that every outgoing edge of a decision point is either an $\alpha$-edge or $\beta$-edge. 
Let $q_\alpha(i,j,n)$ denote the probability that the random walk, at time $n$, takes a shortest path 
from $i$ to $d$ after traversing the $\alpha$-edge $(i,j)$. 
Let $q_\beta(i,j,n)$ denote the probability that 
the random walk, at time $n$, will {\em not} return back to node $i$
after traversing the $\beta$-edge $(i,j)$.
Note that the above probabilities depend, in general, on the considered edge, 
on the network structure and on the set of weights at time $n$.  

\begin{definition}[{\bf $\alpha^*$-edge and $\beta^*$-edge}]
An $\alpha^*$-edge is an $\alpha$-edge such that, after traversing it, the random walk 
takes a shortest path to $d$ with probability 1, and thus $q_\alpha(i,j,n)=1$. 
A $\beta^*$-edge is a $\beta$-edge such that, after traversing it, the random walk 
does not return to node $i$ with probability 1, and thus $q_\beta(i,j,n) = 1$. 
\end{definition}
Note that $\alpha^*$-edge and $\beta^*$-edge can occur due solely to topological constraints.
In particular, we have an $\alpha^*$-edge whenever the random walk, after traversing
the edge, can reach $d$ only through paths of minimum length.   
In a cycle-free network, all $\beta$-edges are necessarily $\beta^*$-edges.

\subsection{The single decision point}\label{subsec:onepoint}
As a necessary first step, we will consider the simple case in which there is 
a single decision point in the network. The thorough analysis of this scenario 
provides a basic building block towards the analysis of the general case. 

We start considering the simplest case in which there are 
two outgoing edges (edge 1 and edge 2) from the decision point, 
whose initial weights are denoted by $w_1[0]$ and $w_2[0]$, respectively.
Let $L_1$ and $L_2$ denote the (deterministic) length of the path experienced by random walks 
when traversing edge 1 and edge 2, respectively. Correspondingly, let 
$\Delta_1 = f(L_1)$ and $\Delta_2 = f(L_2)$ denote the rewards given the 
edge 1 and edge 2, respectively.

The mathematical tool used here to analyze this system, especially its 
asymptotic properties, are P\'{o}lya urns~\cite{book}. 
The theory of P\'{o}lya urns is concerned with the evolution of the 
number of balls of different colors (let $K$ be the number of colors) 
contained in an urn from which we repeatedly draw one ball uniformly at random.
If the color of the ball withdrawn is $i$, $i = 1,\ldots,K$, then $A_{i,j}$   
balls of color $j$ are added to the urn, $j = 1,\ldots,K$, in addition to the
ball withdrawn, which is returned to the urn. In general, $A_{i,j}$ can be
deterministic or random, positive or negative. Let ${\bm A }$ be the
matrix with entries $A_{i,j}$, usually referred to as the {\em schema}
of the P\'{o}lya urn.


We observe that a decision point can be described by a P\'{o}lya urn, 
where the outgoing edges represent colors, the edge weight is the number 
of balls in the urn\footnote{Although P\'{o}lya urn models have been 
traditionally developed considering an integer number of balls for each 
color, analogous results hold in the case of real numbers, 
when all $A_{i,j}$ are positive (as in our case).}, 
and entries $A_{i,j}$ correspond to edge reinforcements 
according to taken path lengths (through function $f(\cdot)$).
In the simple case with only two edges, we obtain the following schema:
\begin{equation}
{\bm A }  =  \left( \begin{array}{cc}
\Delta_1 & 0 \\
0 & \Delta_2 \\
\end{array} \right) 
\end{equation}
We first consider the situation in which $\Delta_1 = \Delta_2$, which occurs
when both edges are part of a shortest path, and thus, both edges are $\alpha^*$-edges. 
A classical result in P\'{o}lya urns states that the normalized
weight of edge 1 (similarly for edge 2), i.e., the weight on edge 1 divided 
by the sum of the weights, tends in distribution to a beta distribution:
\begin{equation}\label{eq:beta1}
\frac{w_1[n]}{w_1[n]+w_2[n]} \xrightarrow[]{\cal D} \beta\left(\frac{w_1[0]}{\Delta_1},\frac{w_2[0]}{\Delta_2}\right) 
\end{equation}
Note that in this simple case the above beta distribution
completely characterizes the asymptotic probability
of traversing the shortest path comprising edge 1 (or edge 2).
Hence, we obtain a special case of the general result stated in 
\ref{maintheo}, where the random variable $c({\cal P})$ is a 
beta distribution which depends both on the update function and the initial 
weights. Informally, we say that both shortest paths will always \lq survive', 
as they will be asymptotically used with a (random) non-zero probability, 
independent of the sample path taken by the system. 

The above result can be directly generalized to the case of $K$ outgoing
edges, all belonging to shortest paths. Indeed, let's denote the
asymptotic normalized weight of edge $i$ by $r_i$:
$$r_i = \lim_{n \rightarrow \infty} \frac{w_i[n]}{\sum_{j=1}^{K} w_j[n]}$$
Moreover, let $\alpha_i = \frac{w_i[0]}{\Delta_i}$.
Then it is known that the joint probability density function of the $r_i$'s 
tends to a Dirichlet distribution with parameters $\{\alpha_i\}$. 

A useful property of the Dirichlet distribution is aggregation: if
we replace any two edges with initial weights $w_i$, $w_j$ by a single edge with 
initial weight $w_1 + w_2$, we obtain another Dirichlet distribution where 
the \lq combined' edge is associated to parameter $\alpha_i + \alpha_j$, i.e., if
${\bm r} = (r_1,\ldots,r_K) \sim {\rm Dirichlet}(\alpha_1,\ldots,\alpha_K)$
then ${\bm r'} = (r_1,\ldots,r_i + r_j,\ldots r_K) \sim {\rm Dirichlet}(\alpha_1,\ldots,\alpha_i+\alpha_j,\ldots\alpha_K)$.
Note that the marginal distribution with respect to any of the edges
is, as expected, a beta distribution, i.e.,
$$ r_i \sim \beta \left(\alpha_i,\sum_{j=1,j\neq i}^K \alpha_i\right) $$

Let's now consider outgoing edges that lead to paths of different
lengths, starting from the simple situation in which we have just two  
edges. Without lack of generality, let's assume that $\Delta_1 > \Delta_2$ 
in which case edge 1 is an $\alpha^*$-edge and edge 2 is a $\beta^*$-edge.
The analysis of the corresponding P\'{o}lya urn model uses a technique 
known as {\em Poissonization}~\cite{book}. The basic idea is to embed
the discrete-time evolution of the urn in continuous time, associating to each
ball in the urn an independent exponential timer with parameter 1. When a 
timer \lq fires', the associated ball is drawn, and we immediately perform the 
corresponding ball additions (starting a new timer for each added ball). 
The memoryless property of the exponential distribution guarantees that 
the time at which a ball is drawn is a renewal instant for the system. Moreover, 
competition among the timers running in parallel exactly produces the desired 
probability to extract a ball of a given color at the next renewal instant. This 
means that, if $t_n$ is the (continuous) time at which the $n$-th timer fires,
at time $t_n$ the number of balls in the continuous-time system has {\em exactly} 
the same distribution as the number of balls in the original discrete-time
system after $n$ draws. It follows that the asymptotic behavior (as $t \rightarrow \infty$)
of the continuous-time system coincides with the asymptotic behavior
of the discrete-time system (as $n \rightarrow \infty$), but the 
continuous-time system is more amenable to analysis, thanks to 
the independence of all Poisson processes in the urn.

The Poissonization technique leads to the following fundamental result: 
Let ${\bm w}(t)$ be the (column) vector of edge weights
at time $t$ in the continuous-time system. We have (Theorem 4.1 in \cite{book}):
${\bm E }[{\bm w}(t)] =  e^{{\bm A}^T t} \, {\bm w}(0)$.
The above result can be extended to the case in which the entries
of schema ${\bm A}$ are independent random variables
(independent among them and from one draw to another) by simply 
substituting ${\bm A}^T$ with ${\bm E}[{\bm A}^T]$: 
\begin{equation}\label{eq:poissonization}
{\bm E }[{\bm w}(t)] =  e^{{\bm E}[{\bm A}^T] t} \, {\bm w}(0)
\end{equation} 
i.e., by considering a schema in which random entries
are replaced by their expectations. This extension will be particularly 
useful in our context.

\section{Asymptotic analysis}\label{sec:asym}
In this section we prove Theorem \ref{maintheo} first constrained to 
directed acyclic graphs (DAG), then relaxing to general 
topologies under the multiple-reward model and finally to 
the single-reward model. 

\vspace{-0mm}
\subsection{The DAG case}\label{subsec:dag}
Let ${\cal G}$ be a directed acyclic graph (DAG) and note that 
in this case edges are either $\alpha$-edges or $\beta^*$-edges. 
Moreover, the absence of cycles forbids traversing an edge more than once, 
so the single-reward model coincides with the multiple-reward model. 

We first introduce the following key lemma.
\begin{lemma}\label{lem1}
Consider a decision point having one or more \mbox{$\alpha^*$-edges} 
and one or more $\beta^*$-edges. The normalized weight of 
any $\beta^*$-edge vanishes to zero as $n \rightarrow \infty$.
\end{lemma}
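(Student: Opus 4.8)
The plan is to study the P\'olya-type urn living at the decision point $i$ on its own, with $i$'s outgoing edges as colors, and to show that the total weight on the $\alpha^*$-edges outgrows the total weight on the $\beta^*$-edges. First I would aggregate the colors: let $W^\alpha[n]$ and $W^\beta[n]$ be the sums of weights of $i$'s $\alpha^*$-edges and $\beta^*$-edges respectively, and $T_i[n]$ the total outgoing weight of $i$. In a DAG a walk visits $i$ at most once, so each time $i$ is visited exactly one of its edges is reinforced, by an amount $f(L_n)$; and since the graph is acyclic, $L_n$ ranges over a \emph{finite} set of path lengths, so $f(L_n)\in[\delta_{\min},\delta_{\max}]$ for two universal positive constants. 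Thus the object to analyse is a two-color generalized P\'olya urn, drawn once per visit to $i$, with diagonal schema (reinforcing a color adds only to that color) and bounded random replacements; the whole point is to control the \emph{conditional mean} replacement of each color.

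The key step --- and the genuinely new ingredient compared with the single decision point of Section~\ref{subsec:onepoint}, where the path length through the decision point was deterministic --- is a \emph{uniform reward gap}: there is a constant $\gamma>0$ such that, at every visit to $i$, conditionally on the current weights and on the walk visiting $i$,
\[
\mathbb{E}[f(L_n)\mid \text{an }\alpha^*\text{-edge is chosen at }i]\;\ge\;\mathbb{E}[f(L_n)\mid \text{a }\beta^*\text{-edge is chosen at }i]\;+\;\gamma.
\]
The reason is that the choice made at $i$ is independent of the route $\sigma$ along which the walk arrived at $i$, so both conditional expectations are averages over the \emph{same} conditional law of $\sigma$; it therefore suffices to compare, for each fixed route, the reward of an $\alpha^*$-edge against that of a $\beta^*$-edge. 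Writing $L_{\min}(i,d)$ for the length of a shortest path from $i$ to $d$, traversing an $\alpha^*$-edge yields exactly $f(|\sigma|+L_{\min}(i,d))$ (the continuation is a shortest $i$--$d$ path by definition of $\alpha^*$), whereas traversing a $\beta^*$-edge $(i,j)$ yields $f\bigl(|\sigma|+1+\ell_{j\to d}\bigr)\le f\bigl(|\sigma|+1+L_{\min}(j,d)\bigr)\le f\bigl(|\sigma|+1+L_{\min}(i,d)\bigr)$, where $\ell_{j\to d}$ is the (random) remaining length, using $f$ strictly decreasing, $\ell_{j\to d}\ge L_{\min}(j,d)$, and the fact that a $\beta$-edge of $i$ obeys $1+L_{\min}(j,d)>L_{\min}(i,d)$, hence $L_{\min}(j,d)\ge L_{\min}(i,d)$. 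The per-route gap $f(|\sigma|+L_{\min}(i,d))-f(|\sigma|+1+L_{\min}(i,d))$ is strictly positive, and since $|\sigma|$ takes finitely many values we may take $\gamma$ to be the minimum of these, uniformly in $n$.

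With the gap in hand I would show the ratio $H[n]:=W^\beta[n]/W^\alpha[n]$ tends to $0$ almost surely. A direct computation with the urn update --- using that, given a visit to $i$, an $\alpha^*$-edge is picked with probability $W^\alpha[n-1]/T_i[n-1]$ and a $\beta^*$-edge with probability $W^\beta[n-1]/T_i[n-1]$, and invoking the reward gap --- gives, at each visit to $i$,
\[
\mathbb{E}\bigl[H[n]\mid \mathcal F_{n-1}\bigr]\;\le\;H[n-1]\Bigl(1-\frac{\gamma}{T_i[n-1]}+O\!\bigl(\tfrac{1}{W^\alpha[n-1]\,T_i[n-1]}\bigr)\Bigr).
\]
Since each visit raises $T_i$ by at most $\delta_{\max}$, we have $T_i\asymp(\text{number of visits to }i)$, so $\sum 1/T_i=\infty$; and since $W^\alpha\ge W^\alpha[0]>0$ while an $\alpha^*$-edge is chosen infinitely often (its selection probability at the $m$-th visit is $\ge c/m$, so conditional Borel--Cantelli applies), we get $W^\alpha[n]\to\infty$ and the error term is eventually $o(1/T_i)$. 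Hence, past a finite random time, $H[n]\big/\prod\bigl(1-\tfrac{\gamma}{2}T_i^{-1}\bigr)$ is a nonnegative supermartingale; it converges a.s., and the product tends to $0$, forcing $H[n]\to 0$ and therefore the normalized weight of every $\beta^*$-edge, which is at most $H[n]/(1+H[n])$, to vanish. Equivalently, one may Poissonize the urn at $i$ and note that the aggregated $\alpha^*$-weight carries the strictly larger exponential growth rate, which is the ``large urn'' reason the $\beta^*$-fraction dies.

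The main obstacle is exactly this reward gap: because the length of the walk \emph{before} it reaches $i$ is itself random --- different $s$--$i$ routes can have different lengths --- a single walk can reward a $\beta^*$-edge as much as, or even more than, it would have rewarded an $\alpha^*$-edge, so no pathwise domination is available; the proof must extract the gap in conditional expectation only and show that this already drives the urn. A secondary technical point is that $i$ must be visited infinitely often (otherwise its weights freeze and nothing can be said); this is handled by an induction along a topological order together with the conditional Borel--Cantelli lemma --- a frozen upstream edge would still be selected with probability $\Theta(1/n)$ at each walk and hence, contradictorily, infinitely often.
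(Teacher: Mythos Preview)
Your proposal is correct and rests on the same core observation as the paper --- conditional on the walk reaching $i$ with accumulated length $\ell$, an $\alpha^*$-edge earns $f(\ell+\hat L)$ while any $\beta^*$-edge earns at most $f(\ell+\hat L+1)$ --- but you package the argument differently. The paper works with the normalized weight $Z_n=\dot w/(\dot w+\hat w)$ and shows by an \emph{exact} one-line computation (no Taylor expansion, no error term) that $\E[Z_{n+1}\mid\mathcal F_n,\ell_{n+1}]\le Z_n$, so Doob's theorem gives a.s.\ convergence to some $z\in[0,1]$; it then invokes Poissonization and a standard triangular-urn limit theorem (Janson) to conclude $z=0$ from $a>d$. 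You instead work with the ratio $H=W^\beta/W^\alpha$, extract the gap $\gamma$ uniformly (using that a DAG has only finitely many $s$--$i$ path lengths), and run a multiplicative supermartingale argument that collapses the two stages into one. What you gain is self-containment --- no appeal to external urn limit theorems --- at the price of the Taylor bookkeeping and the auxiliary step $W^\alpha\to\infty$; what the paper gains is a cleaner supermartingale (the inequality $\E[Z'_{n+1}\mid\mathcal F_n,\ell_{n+1}]\le Z_n$ holds \emph{identically}, not just eventually), at the price of outsourcing the identification of the limit. Both proofs leave the ``$i$ is visited infinitely often'' issue at the level of a sketch; your Borel--Cantelli induction along a topological order from $s$ is the right way to make it rigorous, and the paper does not do better on this point.
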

\begin{proof}
Let $\hat{L}$ denote the length of the shortest path from the decision point to 
$d$. Note that this path length is realized by the random walk after following 
an $\alpha^*$-edge. Observe that $\alpha^*$-edges can be merged together into a single 
virtual \mbox{$\alpha^*$-edge} whose weight, denoted by $\hat{w}$, 
is defined as the sum of the weights of the merged $\alpha^*$-edges. 
Similarly, we will merge all $\beta^*$-edges into a single virtual 
$\beta^*$-edge of weight $\dot{w}$, defined as the sum of the 
weights of the merged $\beta^*$-edges.

Let $\{Z_n, n\geq 1\}$ be the stochastic process corresponding to 
$Z_n = \frac{\dot{w}[n]}{\dot{w}[n] + \hat{w}[n]}$,
i.e., $Z_n$ is the normalized weight of the virtual merged $\beta^*$-edge
after the $n$-th random walk. We are going to show that $\lim_{n \to \infty} Z_n = 0$ 
which implies that the asymptotic probability to follow any $\beta^*$-edge goes 
to zero as well. 
The proof is divided into two parts. First, we show that $\lim_{n \to \infty} Z_n$ 
exists almost surely, namely, $Z_n$ converges to a given constant $z \in [0,1]$. 
Second, we will show that $z$ can only be equal to 0.
For the first part, we will use Doob's Martingale Convergence Theorem \cite{durrett}, after proving that
$Z_n$ is a super-martingale. Since $\{Z_n\}$ is discrete time, and $0 \leq Z_n \leq 1$, it suffices
to prove that $\E[Z_{n+1}|{\cal F}_n] \leq Z_n$, where the filtration ${\cal F}_n$ corresponds to all available information after the $n$-th walk.
Now, the normalized weight, at time $n+1$, of any $\beta^*$-edge 
is stochastically dominated by the normalized weight, at time $n+1$, of the same $\beta^*$-edge 
assuming that it belongs to a path of length $\hat{L} + 1$. This is essentially the reason 
why we can merge all $\beta^*$-edges into a single virtual $\beta^*$-edge belonging 
to a path of length $\hat{L} + 1$.
Hence, $\E[Z_{n+1}|{\cal F}_n] \leq \E[Z'_{n+1}|{\cal F}_n]$, where $Z'_{n+1}$ is the
aggregate normalized weight of the virtual $\beta^*$-edge.
We proceed by considering what can happen when running the \mbox{$(n+1)$-th} walk. Two cases are possible:
i) either the random walk does not reach the decision point, in which case $Z'_{n+1} = Z_n$ since 
edge weights are not updated, or ii) it reaches the decision point having accumulated 
a (random) hop count $\ell_{n+1}$. 
In the second case, we can further condition on the value taken by $\ell_{n+1}$ and prove that
$\E[Z'_{n+1}|{\cal F}_n, \ell_{n+1}] \leq Z_n$, $\forall\, \ell_{n+1}$: \vspace{-2mm}
\begin{multline}
\E[Z'_{n+1}|{\cal F}_n, \ell_{n+1}] = 
Z_n \frac{\dot{w}(n) + f(\ell_{n+1} + \hat{L} + 1)} {\dot{w}(n)+\hat{w}(n)+f(\ell_{n+1} + \hat{L} + 1)} + 
(1-Z_n) \frac{\dot{w}(n)} {\dot{w}(n)+\hat{w}(n)+f(\ell_{n+1} + \hat{L})} = \\
Z_n \left[ \frac{\dot{w}(n) + f(\ell_{n+1} + \hat{L} + 1)} {\dot{w}(n)+\hat{w}(n)+f(\ell_{n+1} + \hat{L} + 1)} \right. 
\left. + \frac{\hat{w}(n)}{\dot{w}(n)} \frac{\dot{w}(n)} {\dot{w}(n)+\hat{w}(n)+f(\ell_{n+1} + \hat{L})} \right] \leq \\
Z_n \left[\frac{\dot{w}(n)+f(\ell_{n+1}+\hat{L} + 1)+ \hat{w}(n)}
{\dot{w}(n)+f(\ell_{n+1}+\hat{L} + 1)+ \hat{w}(n)}\right] = Z_n
\end{multline}
where the inequality holds because $f(\cdot)$ is assumed to be non-increasing.
 
At last, unconditioning with respect to $\ell_{n+1}$, whose distribution descends from ${\cal F}_n$, 
and considering also the case in which the random walk does not reach the decision point, 
we obtain $\E[Z'_{n+1}|{\cal F}_n] \leq Z_n$ and thus $\E[Z_{n+1}|{\cal F}_n] \leq Z_n$.
So far we have proven that $Z_n$ converges to a constant $z \in [0,1]$.
To show that necessarily $z = 0$, we employ the Poissonization technique
recalled in Section \ref{subsec:onepoint}, noticing again that $Z_n$ is stochastically
dominated by $Z'_n$. For the process $Z'_n$, we have:
$$ {\bm A}^T = \left( \begin{array}{cc} f(\ell + \hat{L}) & 0 \\
0 & f(\ell + \hat{L} + 1) \end{array} \right)$$
where $\ell$ is the (random) hop count accumulated at the decision point.
We will show later that the normalized weight of any edge in the network
converges asymptotically almost surely.
Hence, $\ell$ has a limit distribution, that we can use to compute expected 
values of the entries in the above matrix:
$$ \E[{\bm A}^T] = \left( \begin{array}{cc} \E_\ell[f(\ell + \hat{L})] & 0 \\
0 & \E_\ell[f(\ell + \hat{L} + 1)] \end{array} \right) = 
\left( \begin{array}{cc} a & 0 \\
0 & d \end{array} \right) $$
obtaining that $a > d$ when $f(\cdot)$ is strictly decreasing.
At this point, we can just apply known results of P\'{o}lya urns' 
asymptotic behavior (see Theorem 3.21 in~\cite{Janson1}),
and conclude that the normalized weight of the $\beta^*$-edge
must converge to zero.  
Alternatively, we can apply \equaref{poissonization} and observe that in this simple case 
\begin{equation}\label{eq:asymlem1}
\left( \begin{array}{c} 
{\bm E }[\hat{w}(t)] \\
{\bm E }[\dot{w}(t)] \end{array} \right) = 
\left( \begin{array}{cc}
e^{a t} & 0 \\
0 & e^{d t} \end{array} \right)
\left( \begin{array}{c} \hat{w}(0) \\ \dot{w}(0) \end{array} \right) 
\end{equation}
Therefore the (average) weight of the $\alpha^*$-edge
increases exponentially faster than the (average) weight of 
the $\beta^*$-edge.  
\end{proof}


Lemma \ref{lem1} provides the basic building block to prove Theorem \ref{maintheo}. 
\begin{proof}[Proof of Theorem \ref{maintheo} (DAG case)] 
We sequentially consider the decision points of the network
according to the partial topological ordering given by the 
hop-count distance from the destination. Simply put, we start 
considering decision points at distance 1 from the destination, then 
those at distance 2, and so on, until we hit the source node $s$. 
We observe that Lemma \ref{lem1} can be immediately applied to 
decision points at distance 1 from the destination.
Indeed, these decision points have one (or more) $\alpha^*$-edge, with $\hat{L}=1$,
connecting them directly to $d$, and zero or more $\beta^*$-edges connecting them to 
nodes different from $d$. 
Then, Lemma \ref{lem1} allows us to conclude that, asymptotically, the normalized
weight of the virtual $\alpha^*$-edge will converge to 1, whereas the normalized weight 
of all $\beta^*$-edges will converge to zero. This fact essentially allows us to
{\em prune} the $\beta^*$-edges of decision nodes at distance 1, 
and re-apply Lemma \ref{lem1} to decision points at distance 2 (and so on). Note that after the 
pruning, an \mbox{$\alpha$-edge} of a decision point at distance 2 necessarily becomes an 
\mbox{$\alpha^*$-edge}. As a consequence of the progressive pruning
of $\beta$-edges, we remove from the graph all edges which do not belong to 
shortest paths from a given node $i$ to $d$ (when we prune a $\beta$-edge, 
we contextually remove also edges that can only be traversed by following the 
pruned edge, and notice that by so doing we can also remove some $\alpha$-edge).

When the above iterative procedure hits the source node $s$, we are guaranteed
that only shortest paths from $s$ to $d$ remain in the residual graph (and all of them).
As a consequence, over the residual graph, a random walk starting from $s$ can only 
reach $d$ through a shortest path.  
Note that the normalized weight of any edge $(i,j)$ belonging to 
a shortest path will converge to a random variable $z_{i,j}$ bounded away from zero. 
Hence the asymptotic probability to follow any given shortest path ${\cal P}$, 
given by the product of normalized weights of its edges, will converge 
as well to a a random variable $c({\cal P})$ bounded away from zero.
Conversely, any path which is not a shortest path cannot \lq survive'. 
Indeed, any such path must traverse at least one decision point and 
take at least one $\beta^*$-edge. However, the above iterative procedure will 
eventually prune all \mbox{$\beta^*$-edges} belonging to the considered non-shortest path,
which therefore cannot survive.
\end{proof}

\vspace{-0mm}
\subsection{The multiple reward model in general network}\label{subsec:gen}
We now consider the case of an arbitrary directed graph possibly 
with nodes exhibiting (even multiple) self-loops. Moreover, we first focus on the 
multiple-reward model which is more challenging to analyze, and discuss 
the single-reward model in Section \ref{sec:single}.

Essentially, we follow the same reasoning as in the DAG case, by first 
proving a generalized version of Lemma \ref{lem1}.
\begin{lemma}\label{lem2}
Consider a decision point having one or more \mbox{$\alpha^*$-edges} 
and one or more $\beta$-edges. The normalized weight of 
any $\beta$-edge vanishes to zero as $n \rightarrow \infty$.
\end{lemma}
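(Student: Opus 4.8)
The plan is to mirror the argument used in the DAG case. Lemma~\ref{lem2} is the only new ingredient needed: once it is available, the decision points of the general graph can again be processed in order of increasing hop-distance from $d$, pruning every $\beta$-edge (together with the edges that can only be reached through it) at each layer, so that the $\alpha$-edges of the next layer are promoted to $\alpha^*$-edges, until, when the procedure reaches $s$, only the shortest $s$--$d$ paths survive (this reproduces the proof of Theorem~\ref{maintheo} for the multiple-reward model). So I would concentrate all the work in Lemma~\ref{lem2}. Exactly as in Lemma~\ref{lem1}, I would merge all $\alpha^*$-edges of the decision point $i$ into a single virtual $\alpha^*$-edge of weight $\hat{w}[n]$, merge all $\beta$-edges into a single virtual $\beta$-edge of weight $\dot{w}[n]$, and study $Z_n=\dot{w}[n]/(\dot{w}[n]+\hat{w}[n])$, the goal being $Z_n\to 0$ a.s.; since the normalized weight of any individual $\beta$-edge is bounded above by $Z_n$, this suffices. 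The new feature, absent in Lemma~\ref{lem1}, is that a $\beta$-edge of $i$ need not be a $\beta^*$-edge: after traversing it the walk may return to $i$ and, under the multiple-reward model, traverse a $\beta$-edge of $i$ again, so a single walk can reinforce $\dot{w}$ several times.

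For the first part, a.s.\ convergence of $Z_n$ to a (random) constant $z\in[0,1]$, I would simply invoke the a.s.\ convergence of \emph{all} normalized weights in the network — the same global fact that is already used in the proof of Lemma~\ref{lem1} to obtain the limiting distribution of the hop count $\ell$ — since $Z_n$ is a function of the weights at $i$ alone. (Alternatively one can keep this step self-contained by dominating $Z_n$ from above by a process $Z'_n$ and checking a super-martingale inequality $\E[Z'_{n+1}\mid\mathcal{F}_n]\le Z_n$ as in Lemma~\ref{lem1}; but now one must first condition on the per-walk pattern of visits to $i$ and on the hop counts accumulated at each of them, and the resulting telescoping is considerably more delicate, for the reason explained below.)

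For the second part, $z=0$, I would use Poissonization exactly as in Lemma~\ref{lem1}. Because all normalized weights converge a.s., everything describing a walk's journey once it is at $i$ — the hop count $\ell$ on first reaching $i$, the number and lengths of the $\beta$-excursions, hence the number $N_\beta$ of $\beta$-reinforcements it produces and its total length $L$ — has a limiting distribution. The structural point is that each return to $i$ strictly lengthens the walk and, since $f$ is non-increasing, every reinforcement received by $\dot w$ in a given walk is at most $f(\hat L+1)$ (any path leaving $i$ along a $\beta$-edge reaches $d$, if at all, in at least $\hat L+1$ further hops), whereas $\hat w$ is reinforced using a value tied to the strictly shorter length $\hat L$. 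Feeding the limiting distributions into the asymptotic $2\times2$ schema and applying \eqref{eq:poissonization} (equivalently Theorem~3.21 in \cite{Janson1}), one gets that $\E[\hat w(t)]$ grows at a strictly larger exponential rate than $\E[\dot w(t)]$ whenever $f$ is strictly decreasing, which forces the normalized weight of the virtual $\beta$-edge to $0$ and hence contradicts $z>0$. So $z=0$, and every $\beta$-edge of $i$ has vanishing normalized weight.

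The main obstacle is that $N_\beta$, the number of times a single walk reinforces the $\beta$-edge, is unbounded: each such reinforcement is small (at most $f(\hat L+1)$), but there can be many of them, so the comparison ``$\alpha^*$-reward larger than $\beta$-reward'' that made Lemma~\ref{lem1} a per-walk super-martingale now fails walk-by-walk, and one must argue instead at the level of the two exponential growth rates. Making that rate comparison rigorous requires the limiting distribution of the return structure, and the genuinely delicate regime is the ``boundary'' $z\to 1$, where the $\alpha^*$-edge is chosen only rarely: there one has to exploit both that the number of returns to $i$ within a walk has geometric tails with ratio bounded away from $1$ so long as $z<1$, and that each extra return costs at least one hop and so shrinks the common factor $f(L)$, in order to conclude that the $\alpha^*$-edge still wins. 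This is also exactly why the single-reward model, treated afterwards, is easier: there $N_\beta$ is replaced by $1$ and the comparison essentially reduces to that of Lemma~\ref{lem1}.
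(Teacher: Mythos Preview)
Your overall architecture---merge the $\alpha^*$-edges and the $\beta$-edges, study $Z_n$, show convergence, then use Poissonization to rule out $z>0$---matches the paper. The gap is in the Poissonization step, and it is exactly where you yourself flag the difficulty but then resolve it incorrectly.

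You assert that, once the limiting distributions are fed into the $2\times 2$ schema, ``$\E[\hat w(t)]$ grows at a strictly larger exponential rate than $\E[\dot w(t)]$ whenever $f$ is strictly decreasing''. This is not true in general, and the paper does \emph{not} argue by comparing exponential rates. Under the multiple-reward model the relevant matrix is genuinely upper-triangular,
\[
\E[{\bm A}^T]=\begin{pmatrix} a & b \\ 0 & d \end{pmatrix},
\]
with $b>0$: when the walk selects the $\beta$-edge, loops back, and eventually exits through the $\alpha^*$-edge, the $\alpha^*$-edge is \emph{also} reinforced. Because a single walk can reinforce $\dot w$ an unbounded number of times (your $N_\beta$), one can have $d>a$; in that regime both $\E[\hat w(t)]$ and $\E[\dot w(t)]$ grow like $e^{dt}$, and a pure rate comparison says nothing. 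What actually kills $z>0$ is a self-consistency argument: assuming $Z_n\to z>0$ and restarting the system late enough, the triangular matrix exponential \eqref{eq:asymlem2} forces the relation $\dfrac{d-a}{d-a+b}=z$ (equation \eqref{eq:crucial} in the paper), and the work is to show that this identity \emph{fails} for every $z\in(0,1)$ when $f$ is strictly decreasing. The paper does this by an inductive device on a sequence of truncated reward functions $f_k$ (constant beyond hop $L+k$), verifying the base case $k=0$ by hand and then showing $\frac{b_k}{d_k-a_k}\le \frac{b_{k+1}}{d_{k+1}-a_{k+1}}$ step by step. None of this structure---the off-diagonal $b$, the fixed-point relation \eqref{eq:crucial}, the $f_k$ induction---appears in your sketch; your ``geometric tails'' and ``each extra return costs at least one hop'' observations are the right raw ingredients for computing $a,b,d$, but they do not by themselves give $a>d$, and that inequality is not what one should be aiming for.

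A secondary point: for the first part you invoke ``a.s.\ convergence of all normalized weights'' as a pre-existing global fact. In the paper this is a forward reference that is ultimately discharged by the super-martingale computations inside the lemmas themselves (the paper carries out the analogue of \eqref{eq:lem2} explicitly, summing over the number $i$ of self-loops, and defers the algebra to an appendix). Relying on it as a black box here is at best circular; the super-martingale inequality for the dominating process $Z'_n$ really does need to be checked in the presence of returns, and that is part of the content of the lemma.
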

\begin{proof}
Similarly to the proof of Lemma \ref{lem1}, we merge all \mbox{$\alpha^*$-edges}
into a single virtual $\alpha^*$-edge with total weight $\hat{w}$. 
Moreover, we merge all $\beta$-edges into a single virtual 
$\beta$-edge with weight $\dot{w}$, defined as the sum of the 
weights of the merged $\beta$-edges. Such virtual $\beta$-edge 
can be interpreted as the best {\em adversary} against the virtual $\alpha^*$-edge.
Clearly, the best $\beta$-edge is an outgoing edge that (possibly) brings the random 
walk back to the decision point over the shortest possible cycle, i.e., a self-loop. 
It is instead difficult, {\em a priori}, to establish which 
is the best possible value of its parameter $q_\beta(n)$, i.e.,
the probability (in general dependent on $n$) the makes
the virtual $\beta$-edge the best competitor of the virtual $\alpha^*$-edge.
Therefore, we consider arbitrary values of $q_\beta(n) \in [0,1]$ (technically, if $q_\beta(n)>0$ 
then the $\beta$-edge cannot be a self-loop, but we optimistically assume that 
loops have length 1 even in this case).
In the following, to ease the notation, let $q = q_\beta(n)$. 
Similarly to the DAG case, we optimistically assume that if the random walk
reaches the destination without passing through the \mbox{$\alpha^*$-edge}, 
the overall hop count will be $\ell_{n+1} + i + \hat{L} + 1$, where $\ell_{n+1}$ 
is the hop count accumulated when first entering the decision point, while 
$i \geq 0$ denotes the number of (self) loops.
Instead, if the random walk reaches the destination 
by eventually following the $\alpha^*$-edge, the overall hop count will be $\ell_{n+1} + i + \hat{L}$.
In any real situation, the normalized cumulative weight $Z_n$ of $\beta$-edges 
is stochastically dominated by the weight of the virtual best adversary, having
normalized weight $Z'_n$.
We have:
\begin{multline}\label{eq:lem2}
\E[Z'_{n+1}|{\cal F}_n, \ell_{n+1}] = 
Z_n \left[ \sum_{i=0}^{\infty} [(1-q)Z_n]^i \left( \frac{\hat{w}(n)}{\dot{w}(n)} 
\frac{\dot{w}(n)+i \Delta(i)}{\dot{w}(n)+i \Delta(i)+\hat{w}(n) + \Delta(i)} + \right.\right. \\
\left.\left. q \frac{\dot{w}(n)+i \Delta'(i)}{\dot{w}(n)+i \Delta'(i)+\hat{w}(n)} \right) \right]
\end{multline}
where $\Delta(i) = f(\ell_{n+1} + i + \hat{L})$ and $\Delta'(i) = f(\ell_{n+1} + i + \hat{L} + 1)$.

Now, it turns out that the term in square brackets of the latter expression 
is smaller than or equal to one for any value of $\dot{w}(n)$, $\hat{w}(n)$, $\hat{L}$, $\ell_{n+1}$, $q$ 
and non-increasing function $f(\cdot)$. This property can be easily checked numerically, but a
formal proof requires some effort (see App. \ref{app:compl}).  
As a consequence, $\E[Z'_{n+1}|{\cal F}_n, \ell_{n+1}] \leq Z_n$. 
At last, unconditioning with respect to $\ell_{n+1}$, whose distribution descends from ${\cal F}_n$, 
and considering also the case in which the $(n+1)$-th random walk does not reach the decision point, 
we obtain $\E[Z'_{n+1}|{\cal F}_n] \leq Z_n$ and thus $\E[Z_{n+1}|{\cal F}_n] \leq Z_n$.
Hence, we have that $Z_n$ converges to a constant $z \in [0,1]$.

To show that necessarily $z = 0$, we employ the Poissonization technique as in 
Section \ref{subsec:onepoint}, noticing again that $Z_n$ is stochastically
dominated by $Z'_n$. For the process $Z'_n$, we have:
\begin{equation}\label{eq:abd}
{\bm E}[{\bm A}^T] = \left( \begin{array}{cc}
a & b \\
0 & d \end{array} \right)
\end{equation}
The entries in the above matrix have the following meaning:
\begin{itemize}
\item $a$ is the average reward given to the $\alpha^*$-edge if we
select the $\alpha^*$-edge;
\item $b$ is the average reward given to the $\alpha^*$-edge if we
select the $\beta$-edge;
\item $d$ is the average reward given to the $\beta$-edge if we
select the $\beta$-edge;
\end{itemize}
Note that the average reward given to the $\beta$-edge if we 
select the $\alpha^*$-edge is zero.

Luckily, the exponential of a 2x2 matrix in triangular form
is well known \cite{wso} (see also \cite{Janson2}
for limit theorems of triangular P\'{o}lya urn schemes).
In particular, when $a \neq d$ we obtain:
\begin{equation}\label{eq:asymlem2}
\left( \begin{array}{c} 
{\bm E }[\hat{w}(t)] \\
{\bm E }[\dot{w}(t)] \end{array} \right) = 
\left( \begin{array}{cc}
e^{a t} & \frac{b}{d-a} (e^{d t} - e^{a t}) \\
0 & e^{d t} \end{array} \right)
\left( \begin{array}{c} \hat{w}(0) \\ \dot{w}(0) \end{array} \right) 
\end{equation}
The special case in which $a = d$ will be considered later
(see Section \ref{sec:single}). 

To show that necessarily $z=0$, we reason by contradiction,
assuming that $Z'(n)$ converges to $z > 0$.
This implies that 
\begin{equation}\label{eq:sim}
\dot{w}(t) = \frac{z}{1-z} \hat{w}(t) + o(\hat{w}(t))
\end{equation} 
Moreover, we will assume that a large enough number of walks
has already been performed such that, for all successive walks,
the probability to follow the $\beta$-edge is essentially equal to $z$.
Specifically, let $n^*$ be a large enough time step such that 
the normalized weight of the $\beta$-edge is  $z - \epsilon < Z'(n) < z + \epsilon$
for all $n > n^*$.
We can then \lq restart' the system from time $n^*$, 
considering as initial weights $\dot{w}(n^*)$ and $\hat{w}(n^*)$
(the specific values are not important). 

Taking expectation of \equaref{sim} and plugging in the expressions
of the average weights in \equaref{asymlem2}, we have that the following 
asymptotic\footnote{Given two functions $f(n)$ and $g(n)$, we 
write $f(n)\sim_e g(n)$ if $\lim_{n\to\infty}\frac{f(n)}{g(n)}=1$.}
relation must hold:
$$ e^{d t} \dot{w}(n^*) \sim_e \frac{z}{1-z} \left( e^{a t} \hat{w}(n^*) 
 + \frac{b}{d-a} (e^{d t} - e^{a t})\dot{w}(n^*) \right) $$
Clearly, the above relation does not hold if $d < a$.
If $d > a$, the relation is satisfied when
\begin{equation}\label{eq:crucial}
\frac{b}{d-a} = \frac{1-z}{z} \Leftrightarrow \frac{d-a}{d-a+b} = z 
\end{equation}

Interestingly, we will see that \equaref{crucial} is verified when the reward function
is constant, suggesting that in this case the $\beta$-edge can indeed
\lq survive' the competition with the \mbox{$\alpha^*$-edge}.
Instead, we will show that \mbox{$\frac{d-a}{d-a+b} < z-\epsilon$}, 
for any strictly decreasing function $f(\cdot)$, proving that the normalized weight of the 
$\beta$-edge cannot converge to any $z > 0$.

For simplicity, we will consider first the the case in which
$\ell_{n+1}$, the hop count accumulated by the random walk while 
first entering the decision point,
is not random but deterministic and equal to $\ell$.
Under the above simplification, we have:
\begin{eqnarray}
& \hspace{-1cm} a = \hspace{-10mm} & f(\ell + \hat{L}) \label{eq:a} \\
& \hspace{-1cm} b = \hspace{-10mm} & (1-q)(1-z)\sum_{i=0}^{\infty}[z(1-q)]^i f(\ell+i+1+\hat{L}) \label{eq:b} \\
&\hspace{-1cm} d = \hspace{-10mm} & \sum_{i=0}^{\infty}[z(1-q)]^i \left[ q(i+1)f(\ell+i+1+\hat{L}) + \nonumber \right. \\ 
&& \left. (1-q)(1-z)(i+1)f(\ell+i+1+\hat{L}) \right] \label{eq:d}
\end{eqnarray} 
In the special case in which the reward function is constant 
(let this constant be $C$), we obtain:
\begin{eqnarray}
 &a =& C \label{eq:ac} \\
 &b =& C \frac{(1-q)(1-z)}{1-z + qz} \label{eq:bc} \\
 &d =& C \frac{1}{1-z+qz} \label{eq:dc}
\end{eqnarray}
It is of immediate verification that \equaref{ac},\equaref{bc},\equaref{dc}
satisfy \equaref{crucial} for any $q \in [0,1)$ (the case $q=1$ corresponds
to having $a = d$, which is considered separately in Section \ref{sec:single}).

To analyze what happens when $f(\cdot)$ is a decreasing function,
we adopt an iterative approach. We consider a sequence 
of reward functions $\{f_k(\cdot)\}_k$, indexed by $k = 0,1,2,\ldots$,
defined as follows. Let $L = \ell + \hat{L}$ be the minimum path length
experience by random walks traversing the decision point.
We define:
\begin{equation}\label{eq:fk}
f_k(L+i) = \begin{cases} f(L+i) &\mbox{if } 0 \leq i \leq k \\
f(L+k) & \mbox{if } i > k  \end{cases} 
\end{equation}
In words, function $f_k(\cdot)$ matches the actual reward function
$f(\cdot)$ up to hop count $L+k$, while is takes a constant value (equal to
$f(L+k)$ for larger hop count. See Figure \ref{fig:functions}.

\tgifeps{8}{functions}{Example of reward functions $f_k(\cdot)$ and $f_{k+1}(\cdot)$.
Values taken by the actual reward function $f(k)$ are denoted by squares.
Values taken by function $f_k(\cdot)$ (function $f_{k+1}(\cdot)$) are connected by solid (dashed) line.}

In our proof, we will actually generalize the result in Theorem  
\ref{maintheo}, allowing the reward function to be non-increasing for 
values larger than $L$. To simplify the notation, let $f(L) = C$. For $i = 1,2,\ldots$, 
let $f(L+i) = C-\delta_i$, with $\delta_1 > 0$, and $\delta_i \geq \delta_{i-1}$.
   
Let $a_k,b_k,d_k$ ($a_{k+1},b_{k+1},d_{k+1}$) be the entries of matrix 
\equaref{abd} when we assume that rewards are given to edges according
to function $f_k()$ (function $f_{k+1}()$), with $k \geq 0$.
As a first step, we can show that \equaref{crucial} does not hold
already for $k = 0$, i.e., for a reward function which is 
equal to $C$ for hop count $h = L$, and equal to $C-\delta_1$ for
any $h > L$. Indeed, in this case we have:
\begin{eqnarray*}
 &a_0&= C \\
 &b_0&= (1-q)(1-z)\sum_{i=0}^{\infty}[z(1-q)]^i (C-\delta_1) \\
 &   &= \frac{(1-q)(1-z)}{1-z+z q}(C-\delta_1) \\
 &d_0&= (1-z+z q)\sum_{i=0}^{\infty}[z(1-q)]^i (i+1) (C-\delta_1) \\
 &   &= \frac{1}{1-z+z q} (C-\delta_1) 
\end{eqnarray*} 
It can be easily check that $\frac{d_0-a_0}{d_0-a_0+b_0} < z - \epsilon $
for any \mbox{$0 < \epsilon < \frac{\delta_1 (1-z)(1-z + z q)}{(1-q)C - \delta_1}$}.
  
To show that \equaref{crucial} cannot hold for the actual reward function
$f(\cdot)$, it is then sufficient to prove the inductive step
$$ \frac{b_k}{d_k-a_k} \leq \frac{b_{k+1}}{d_{k+1}-a_{k+1}} $$
Note, indeed, that the sequence of functions $\{f_k(\cdot)\}_k$
tends point-wise to $f(\cdot)$.
Now, for any $k$ for which $\delta_{k+1} = \delta_{k}$ there is nothing to prove,
since in this case $\frac{b_k}{d_k-a_k} = \frac{b_{k+1}}{d_{k+1}-a_{k+1}}$.
So let's suppose that $\delta_{k+1} > \delta_{k}$.

We have $a_k = a_{k+1} = C$. We can write $b_k$ as: \vspace{-1mm}
\begin{eqnarray*} 
b_k & = &  \hat{b} + (1-q)(1-z) \sum_{i=k+1}^{\infty}[z(1-q)]^i (C-\delta_{k}) \\
&& =  \hat{b} + (1-q)(1-z)(C-\delta_{k})\frac{[z(1-q)]^{k+1}}{1-z+z q}
\end{eqnarray*}
where \vspace{-2mm}
$$\hat{b} = (1-q)(1-z) \sum_{i=0}^{k}[z(1-q)]^i (C-\delta_{i+1})$$
We can write $b_{k+1}$ as: \vspace{-1mm}
\begin{eqnarray*} 
b_{k+1} & = &  \hat{b} + (1-q)(1-z) \sum_{i=k+1}^{\infty}[z(1-q)]^i (C-\delta_{k+1}) \\
&& =  \hat{b} + (1-q)(1-z)(C-\delta_{k+1})\frac{[z(1-q)]^{k+1}}{1-z+z q}
\end{eqnarray*}
Similarly, we have: \vspace{-1mm}
\begin{eqnarray*} 
d_k & \!\!\!=\!\!\! &  \hat{d} + (C-\delta_{k}) [z(1-q)]^{k+1} \left(k+1+\frac{1}{1-z +z q}\right) \\
d_{k+1} & \!\!\!=\!\!\! &  \hat{d} + (C-\delta_{k+1}) [z (1-q)]^{k+1} \left(k+1+\frac{1}{1-z +z q}\right)
\end{eqnarray*}
where \vspace{-2mm}
$$\hat{d} = \sum_{i=0}^{k}[z (1-q)]^i (i+1)(1-z +z q)(C-\delta_{i+1})$$

We will assume that both $d_k > a_k$ and $d_{k+1} > a_{k+1}$, otherwise the result
is trivial (if $d_{k} < a_k$, then also $d_{k+1} < a_{k+1}$, since $a_{k+1} = a_k$, $d_{k+1} < d_{k}$. If
$d_{k+1} < a_{k+1}$, the normalized ratio of the $\beta$-edge can only tend to zero).
Under this assumption, we can show that 
\begin{equation}\label{eq:fin}
\frac{b_k}{d_k-a_k} < \frac{b_{k+1}}{d_{k+1}-a_{k+1}} 
\end{equation} 
Indeed, plugging in the expressions of $a_k,b_k,d_k,a_{k+1},b_{k+1},d_{k+1}$, 
after some algebra we reduce inequality \equaref{fin} to:
$$ \hat{b}[(k+1)(1-z +z q) + 1]+C(1-q)(1-z) > \hat{d} (1-q)(1-z)$$
At last, recalling the definitions of $\hat{b}$ and $\hat{d}$, we obtain that the
above inequality is satisfied if
\begin{eqnarray*} 
&& \sum_{i=0}^{k}[z(1-q)]^i [(k+1)(1-z+z q) + 1](C-\delta_{i+1}) > \\
&& \sum_{i=0}^{k}[z(1-q)]^i (i+1)(1-z+z q)(C-\delta_{i+1})
\end{eqnarray*} 
which is clearly true, since $k \geq i$ when $i$ varies from 0 to $k$.

We now provide a sketch of the proof for the case in which the random walk
arrives at the decision point having accumulated a random
hop count $\ell_{n+1}$. After long enough time, we can assume that
the probability distribution of $\ell_{n+1}$ has converged to a
random but fixed distribution that no longer depends on $n$.    
Indeed, such distribution depends only on normalized edge weights,
which in the long run converge to constant values.
Let $p_m = \Pp\{\ell_{n+1} = \ell_{\min} + m\}$, $m \geq 0$, where
$\ell_{\min}$ is the minimum hop count that can be accumulated 
at the decision point. We can use $\{p_m\}_m$ to compute expected values of $a$, $b$, $d$
as defined in \equaref{a}, \equaref{b}, \equaref{d}, and apply again 
the Poissonization technique to compute asymptotic values of edge weights. 

Specifically, letting $C = f(l_{\min} + \hat{L})$, we obtain: 
\begin{eqnarray*}
 & \hspace{-5mm} \E[a] = \hspace{-3mm}& \sum_{m=0}^{\infty} p_m (C-\delta_m) \\
 &\hspace{-5mm} \E[b] = \hspace{-3mm}& (1-q)(1-z) \sum_{m=0}^{\infty} \sum_{i=0}^{\infty}[z (1-q)]^i (C-\delta_{m+i+1}) \\
 &\hspace{-5mm} \E[d] = \hspace{-3mm}& (1-z +z q) \sum_{m=0}^{\infty} \sum_{i=0}^{\infty}[z (1-q)]^i (i+1) (C-\delta_{m+i+1}) 
\end{eqnarray*} 
Similarly to before, we prove by contradiction that \equaref{crucial} cannot hold, 
through an iterative approach based on the sequence of reward functions $\{f_k(\cdot)\}_k$.
As basic step of the induction, we take the reward function
$f_0(\cdot)$ equal to $C -\delta_1$ for any hop count larger than $\ell_{\min} + \hat{L}$.
Hence, we have $\delta_{m+i+1} = \delta_1$, $\forall m,i \geq 0$.
It follows that $E[b]$ is exactly the same as in \equaref{b}, and
$E[d]$ is exactly the same as in \equaref{d}. The only quantity
that is different is $\E[a] = p_0 C + (1-p_0)(C-\delta_1) = C-\delta_0$, where
$\delta_0 < \delta_1$ as long as $p_0 > 0$. 
Therefore, whenever there is a non-null probability $p_0$ to reach the decision
point with minimum hop count, the basic induction step proven before
still holds here, by redefining $C$ and $\delta_1$ as $C-\delta_0$ and $\delta_1-\delta_0$, respectively.
One can also prove that the generic iterative step still holds, by following the same lines
as in the basic case. Indeed, one can verify that        
$$ \frac{\E[b_k]}{\E[d_k]-\E[a_k]} \leq \frac{\E[b_{k+1}]}{\E[d_{k+1}]-\E[a_{k+1}]}$$ 
when $\E[d_{k+1}] > \E[a_{k+1}]$.
This concludes the proof of Lemma \ref{lem2}. 
\end{proof} 

\begin{proof}[Proof of Theorem \ref{maintheo} (general case)] 
The proof is exactly the same as in the DAG case, with the difference that
we employ Lemma \ref{lem2} instead of Lemma \ref{lem1} to iteratively prune 
$\beta$-edges from the decision points, leaving only paths from $s$ to $d$ of minimum length.  
\end{proof} 

\vspace{-0mm}
\subsection{The single reward model in general network}\label{sec:single}
We conclude the asymptotic analysis considering the single-reward model
in a general directed network. Given the analysis for the multiple-reward 
model, the single-reward model is almost immediate. Indeed, the expressions 
for $a$ and $b$ (respectively in \equaref{a} and \equaref{b}) are left unmodified, 
as well as their averages $\E[a]$ and $\E[b]$ with respect to hop
count accumulated at the decision point. 
Instead, we have
\begin{equation}\label{eq:eds}
\E[d] = (1-z+z q) \sum_{m=0}^{\infty} \sum_{i=0}^{\infty}[z(1-q)]^i (C-\delta_{m+i+1})
\end{equation}
which is clearly smaller than the $\E[d]$ obtained under the multiple-reward model.
Hence, the basic step of the induction used to prove Lemma \ref{lem2} follows immediately
from the consideration that $\frac{\E[d]-\E[a]}{\E[d]-\E[a]+\E[b]}$ is an increasing function 
of $\E[d]$. 
Moreover, simple algebra shows that the iterative step holds also in the case
of single-reward, allowing us to extend the validity of Lemma \ref{lem2}, and thus
Theorem \ref{maintheo}. 

Last, it is interesting to consider the case of single reward model and 
constant reward function, $f(\cdot) = C$.
We have in this case:
\begin{eqnarray}
 &a =& C \label{eq:acs} \\
 &b =& C \frac{(1-q)(1-z)}{1-z + q z} \label{eq:bcs} \\
 &d =& C \label{eq:dcs}
\end{eqnarray}
Since $a=d$, the matrix exponential takes a different form  
with respect to \ref{eq:asymlem2}, that now reads: 
\begin{equation}\label{eq:1b1}
 \left( \begin{array}{c} 
{\bm E }[\hat{w}(t)] \\
{\bm E }[\dot{w}(t)] \end{array} \right) = e^{a t} 
\left( \begin{array}{cc}
1 & b \\
0 & 1 \end{array} \right)
\left ( \begin{array}{c} \hat{w}(0) \\ \dot{w}(0) \end{array} \right ) 
\end{equation} 
We can show by contradiction that the normalized weight of the $\beta$-edge
cannot tend to any $z > 0$. Indeed, assuming to restart the system
after a long enough number of walks $n^*$
such that $\hat{w}(n^*) \approx \frac{1-z}{z} \dot{w}(n^*)$,
we should have:
$$ e^{a t} \dot{w}(n^*) \sim_e \frac{z}{1-z} \left( e^{a t} \hat{w}(n^*) 
 + e^{a t}\,b\,\dot{w}(n^*) \right)$$
which can only be satisfied if $b = 0$. 
Interestingly, $b$ equals 0 when $q = 1$, i.e., when the $\beta$-edge 
becomes a $\beta^*$-edge. This means that, asymptotically, the probability that 
the random walk makes any loop must vanish to zero.
We conclude that, in the case of a constant single reward model, 
many paths can survive (including non-shortest paths), but 
not those containing loops. In other words, surviving edges must 
belong to a DAG. Simulation results, omitted here due to lack 
of space, confirm this prediction.

a
\section{Transient analysis}\label{sec:transient}
Beyond the asymptotic behavior, it is interesting to consider 
the evolution of edge weights over time. In particular, 
since all non-shortest paths are taken with vanishing 
probability, what law governs the decay rate of such 
probabilities? How does the decay rate depend on system 
parameters, such as network topology and reward function? 
Such questions are directly routed to analogous questions
regarding how normalized edge weights evolve over time,  
as the probability of taking a given path is simply the product of the probabilities 
of taking its edges. Thus, we investigate the transient behavior of 
normalized edge weights. 


\vspace{-0mm}
\subsection{Single decision point}\label{subsec:transingle}
We again start by considering the case of a single 
decision point with two outgoing edges (edge 1 and edge 2), whose initial 
weights are denoted by $w_1[0]$ and $w_2[0]$, respectively. 
Let $\Delta_1 = f(L_1)$ and $\Delta_2 = f(L_2)$ be the rewards 
associated to edge 1 and edge 2, and $L_{1,2}$ the corresponding path lengths.


As discussed in Section \ref{subsec:onepoint}, the dynamics 
of this discrete time system can be usefully embedded 
into continuous time using the Poissonization technique, 
which immediately provides the transient behavior of the 
system in the simple form \equaref{poissonization}. 
To complete the analysis, the solution in continuous time $t$ 
should be transformed back into discrete time $n$. 
Unfortunately, this operation can be done exactly only in the 
trivial case of just one edge. With two (or more) edges,
we can resort to an approximate (yet quite accurate) heuristic 
called depoissonization, which can be applied to all P\'{o}lya urn models governed by invertible 
ball addition matrices~\cite{book}. In this simple topology, assuming $\Delta_1 > \Delta_2$, 
the approximation consists in assuming that ball all extractions that have occurred by time $t$
are associated to the winning edge only (this becomes more and more true with the passing of time), 
which permits deriving the following approximate relation between $n$ and $\bar{t}_n$, where $\bar{t}_n$
is the average time at which the $n$-th ball is drawn:
\begin{equation}\label{eqn}
n \approx \frac{w_1[0]}{\Delta_1}e^{\Delta_1 \bar{t}_n}
\end{equation}
from which one obtains $\bar{t}_n \approx \left( \log \frac{n \Delta_1}{w_1[0]} \right) / \Delta_1$.
Using this approximate value of $\bar{t}_n$ into \equaref{poissonization},
we can approximate the expected values of edge weights after $n$ walks as:
\begin{equation}\label{eq:approxexp2}
\left( \begin{array}{c} 
{\bm E }[w_1[n]] \\
{\bm E }[w_2[n]] \end{array} \right)
\approx \left( \begin{array}{cc}
e^{\Delta_1 \bar{t}_n} & 0 \\
0 & e^{\Delta_2 \bar{t}_n} \end{array} \right) 
\left ( \begin{array}{c} w_1[0] \\ w_2[0] \end{array} \right )
= \left( \begin{array}{c} \Delta_1 n \\
w_2[0] \left( \frac{n \Delta_1}{w_1[0]} \right)^{\frac{\Delta_2}{\Delta_1}} \end{array} \right) 
\end{equation}
The above approximation is not quite accurate for small values
of $n$. In particular, the normalized weight of edge 2, according to \equaref{approxexp2},
can be even larger than the initial value $\frac{w_2[0]}{w_1[0]+w_2[0]}$.
For this reason, for small values of $n$, we improve the approximation 
by assuming that the (average) normalized weight of edge 2 cannot exceed its initial value
at time 0.
Indeed, we can easily find analytically the maximum value of $n$, denoted by $n^*$,
for which we bound the normalized weight of edge 2 to the value $\frac{w_2[0]}{w_1[0]+w_2[0]}$.   
It turns out that $n^* = \frac{w_1[0]}{\Delta_1}$. Note that $n^*$ depends
solely on parameters of the first edge.

Our final approximation for the (average) normalized weight of edge 2 is then:
\begin{equation}\label{eq:finalapprox2}
{\bm E } \left[ \frac{w_2[n]}{w_1[n]+w_2[n]} \right] \approx 
\begin{cases} 
\frac{w_2[0]}{w_1[0]+w_2[0]} & \mbox{if } n \leq n* \\
\frac{1}{ 1+ \frac{w_1[0]}{w_2[0]} \left( \frac{n \Delta_1}{w_1[0]} \right)^{1-\frac{\Delta_2}{\Delta_1}} } & 
\mbox{if } n > n^* 
\end{cases}
\end{equation}
The expression for the (average) normalized weight of edge 1 is then easily derived
as the complement of the above.
%

The value of $n^*$ can be used to separate the transient regime into two 
parts: we call the first one, for $n \leq n^*$, the {\em exploration} phase, 
because during this initial interval there is still no clear winner 
between the competing edges, and random walks explore all possibilities      
with lots of variability in the selected edges.
Instead, we call the second one, for $n > n^*$, the {\em convergence} phase, 
where the winning edge starts to emerge and dominate the competition,
whereas the loosing edge inexorably decays. The behavior of this phase
is much more deterministic than the initial one, especially because
at this point edges have accumulated quite a lot of weight, which individual
random walks cannot significantly modify from one walk to another. These two 
phases and decays are illustrated numerically in Section~\ref{sec:results} (Figure \ref{fig:decayposter}).

Interestingly, from \equaref{finalapprox2} we see that the probability to select edge 2 decays
asymptotically to zero (as $n \rightarrow \infty$) according to the power law
$n^{\frac{\Delta_2}{\Delta_1}-1}$. In particular, the larger the 
ratio between $\Delta_1$ and $\Delta_2$, the faster the decay, which 
cannot however be faster than $n^{-1}$.

\vspace{-0mm}
\subsection{General network: recursive method}\label{subsec:trangen}
We propose two different approaches to extend the transient analysis to a general network.
Our goal is to approximate the evolution of the average weight $\E[w_{i,j}[n]]$ 
of individual edges over time $n$ (where the average is with respect to all 
sample paths of the system). 

The first approach is computationally more expensive but conceptually simple and 
surprisingly accurate in all scenarios that we have tested (see Section \ref{sec:results}). 
It is based on the simple idea of making a step-by-step, recursive approximation 
of $\E[w_{i,j}[n]]$ by just taking the average of \equaref{update}:
\begin{equation}\label{eq:meanupdate}
\E[w_{i,j}[n]] = \E[w_{i,j}[n-1]] + \E[\Delta_{i,j}[n]] 
\end{equation}
where the approximation lies in the computation of $\E[\Delta_{i,j}[n]]$, which is the
expected reward given to edge $(i,j)$ after executing the $n$-th walk.
This quantity can be (approximately) evaluated using just
the set of values $\{\E[w_{i,j}[n-1]]\}_{i,j}$ obtained at step $(n-1)$. 

Indeed, note that $\E[\Delta_{i,j}[n]]$ requires to compute
the distribution of the lengths of paths from $s$ to $d$ containing
edge $(i,j)$. Note that we do not need the complete enumeration of these paths, 
but just the distribution of their length. For this, 
standard techniques of Markov Chain analysis can dramatically reduce the
computational burden, as we will see.   
The fundamental approximation that we make while evaluating this distribution
is the following. First observe that the (averaged) probability to follow a given path at time $n$
is exactly the product of (averaged) independent probabilities to select individual edges.
Unfortunately, the probability to select any given edge corresponds to its
(averaged) {\em normalized} weight at time $n-1$:
$$\E[r_{i,j}[n]] = \E \left[\frac{w_{i,j}[n-1]}{\sum_{k} w_{i,k}[n-1]}\right]$$
which cannot be evaluated exactly, since we do not know the (joint) probability
density function of weights. So we approximate $\E[r_{i,j}[n]]$ by the ratio of averages: 
$$\E[r_{i,j}[n]] \approx \frac{\E[w_{i,j}[n-1]]}{\E[\sum_{k} w_{i,k}[n-1]]}$$ 
which is instead completely known if we have values $\{\E[w_{i,j}[n-1]]\}_{i,j}$.  
In essence, this approximation consists in using the ratio of expectations 
as the expectation of a ratio.     

\tgifeps{3}{twopoint}{Example of topology comprising two decision points}

In order to illustrate this recursive approach, consider the topology in Fig.~\ref{fig:twopoint}, 
comprising two decision points: the source node $s$ and the relay node $r$.
The arcs shown in Fig.~\ref{fig:twopoint} do not represent individual edges but 
paths (i.e., sequence of nodes) with lengths denoted by $l_i$, $i=1,\ldots,4$, in hops. 
We also denote by $w_i$, with some abuse of notation, the weight associated to the
first edge of the corresponding path $l_i$. 

The approximate transient analysis of this graph is obtained by the following set of recursive equations:
\begin{equation}\label{eq:weird}
\begin{cases} 
\E[w_1[n]] &= \E[w_1[n-1]] + \E[r_1[n]] \Delta_1 \\
\E[w_2[n]] &= \E[w_2[n-1]] + \E[r_2[n]] \Delta_2[n] \\
\E[w_3[n]] &= \E[w_3[n-1]] + \E[r_2[n]] \, \E[r_3[n]] \Delta_3 \\
\E[w_4[n]] &= \E[w_4[n-1]] + \E[r_2[n]] \, \E[r_4[n]] \Delta_4 
\end{cases} 
\end{equation}
where $\Delta_1 = f(l_1)$, 
$\Delta_2[n] = \E[r_3[n]] f(l_2+l_3) + \E[r_4[n]] f(l_2+l_4)$,
$\Delta_3 = f(l_2+l_3)$, $\Delta_4 = f(l_2+l_4)$.
In the above equations we have denoted the (approximated) normalized weights as $\E[r_i[n]]$.
For example, $\E[r_1[n]] \approx \frac{\E[w_1[n-1]]}{\E[w_1[n-1]]+\E[w_2[n-1]]}$,
and similarly for the other values $\E[r_i[n]]$

The recursive approach can be applied to an arbitrary graph, but in general
it requires to recompute, at each time $n$ (in the case of the single reward model): 
i) the distribution of path lengths from $s$ to $j$ passing through 
edge $(i,j)$, which is an outgoing edge of decision point $i$; ii) the distribution 
of path lengths from $j$ to the destination $d$. The above distributions 
can be computed numerically by solving the transient of discrete-time Markov 
chains with proper absorbing states, but we do not provide the details here. 
Since the overall procedure can be computationally quite expensive in large graphs,
we present in the next section a different, much simpler approach which captures 
the asymptotic law by which average edge weights decay.

\vspace{-0mm}
\subsection{General network: asymptotic power-law decay}\label{subsec:tranlaws}
The asymptotic analysis in Section \ref{sec:asym} shows that the normalized 
weight of all $\beta$-edges (or $\beta^*$-edges) vanishes to zero as $n \to \infty$. 
Can we analytically predict the asymptotic law for such decay? The answer is affirmative, 
and the results offer fundamental insights into how the network 
structure evolves over time.

We start defining a key concept associated to decision points.
\begin{definition}[\bf clock of a decision point] 
The {\em clock} $c_i[n]$ of a decision point $i$ is the expected number 
of random walks that reach $i$ by time $n$:
$$ c_i[n] := \sum_{j = 1}^n \P\{\text{random walk $j$ hits $i$}\} $$ 
\end{definition} 
The clock of a decision point dictates how fast the dynamics of its outgoing edges
evolve with respect to the reference time $n$. As a corollary of 
Theorem \ref{maintheo}, the clock of all decision points traversed by at least one
shortest path is $\Theta(n)$, since any shortest path is asymptotically used with 
non-zero probability. However, decision points not traversed by shortest paths
have clock $o(n)$, and if we put them in sequence we get decision points with increasingly slower clocks.
Nevertheless, we can show that the clock of any decision point is $\omega(1)$.        

Consider, for example, the simple topology in Fig. \ref{fig:twopoint}, and suppose
that $l_1$ is the only shortest path and that $l_3 < l_4$.
Since $l_4$ will be asymptotically used a vanishing fraction of times as compared
to $l_3$ (restricting our attention to the set of random walks passing through $r$, i.e., 
the clock of $r$), we can asymptotically consider decision point $s$
as the sole decision point of the network with two outgoing paths of lengths $l_1$ and $l_2+l_3$.
Hence, we can just apply \equaref{finalapprox2} to compute the power law decay of the $\beta$-edge 
leading to the path with length $l_2+l_3$: 
\begin{equation}\label{eq:powerlaw}
{\bm E } \left[ \frac{w_2[n]}{w_1[n]+w_2[n]} \right] = \Theta(n^{\frac{\Delta_2}{\Delta_1}-1}) 
\end{equation}
where $\Delta_1 = f(l_1)$ and $\Delta_2 = f(l_2+l_3)$.

Moreover, we can again use \equaref{finalapprox2} to compute the scaling order of the (average) clock
of decision point $r$:
\begin{multline}\label{eq:clock}
\E[c_r[n]] = \E\left[\sum_{j = 1}^n \frac{w_2[j]}{w_1[j]+w_2[j]}\right]  = 
\sum_{j = 1}^n \E\left[\frac{w_2[j]}{w_1[j]+w_2[j]}\right] \\ \approx  
\sum_{j = 1}^n \frac{1}{ 1+ \frac{w_1[0]}{w_2[0]} \left( \frac{j \Delta_1}{w_1[0]} \right)^{1-\frac{\Delta_2}{\Delta_1}} }
\!=\! \Theta\! \left(\int_{0}^{n} \!\!\! x^{\frac{\Delta_2}{\Delta_1} - 1} \diff x \!\right) \!\!= \!\Theta(n^{\frac{\Delta_2}{\Delta_1}})
\end{multline}
Note that the clock of $r$ is both $o(n)$ and $\omega(1)$. 
At last, we can compute the power law decay of the first $\beta$-edge of path $l_4$,
by applying again \equaref{finalapprox2} to decision point $r$, with the caveat
of plugging in the clock of $r$ in place of $n$:
$$ {\bm E } \left[ \frac{w_4[n]}{w_3[n]+w_4[n]} \right] = \Theta \left( (n^{\frac{\Delta_2}{\Delta_1}})^{\frac{\Delta_4}{\Delta_3}-1} \right) 
= \Theta \left( n^{\frac{\Delta_2}{\Delta_1} (\frac{\Delta_4}{\Delta_3}-1)} \right) $$
where $\Delta_3 = f(l_2+l_3)$ and $\Delta_4 = f(l_2+l_4)$.

A simple algorithm, that we omit here, can recursively compute the clock of all 
decision points (in scaling order), 
and the power law decay exponent of all decaying edges, starting from the source and moving
towards the destination. We will discuss the implications of our results 
in a significant example presented later in Section \ref{subsec:clocks}.

\section{Validation and insights}\label{sec:results}
We present a selection of interesting scenarios explored numerically 
through simulations to confirm our approximate transient analysis 
and offer insights into the system behavior. 

\subsection{Emergence of shortest paths} 
\begin{figure*}[tbp]
\begin{center}
\includegraphics[width=14cm]{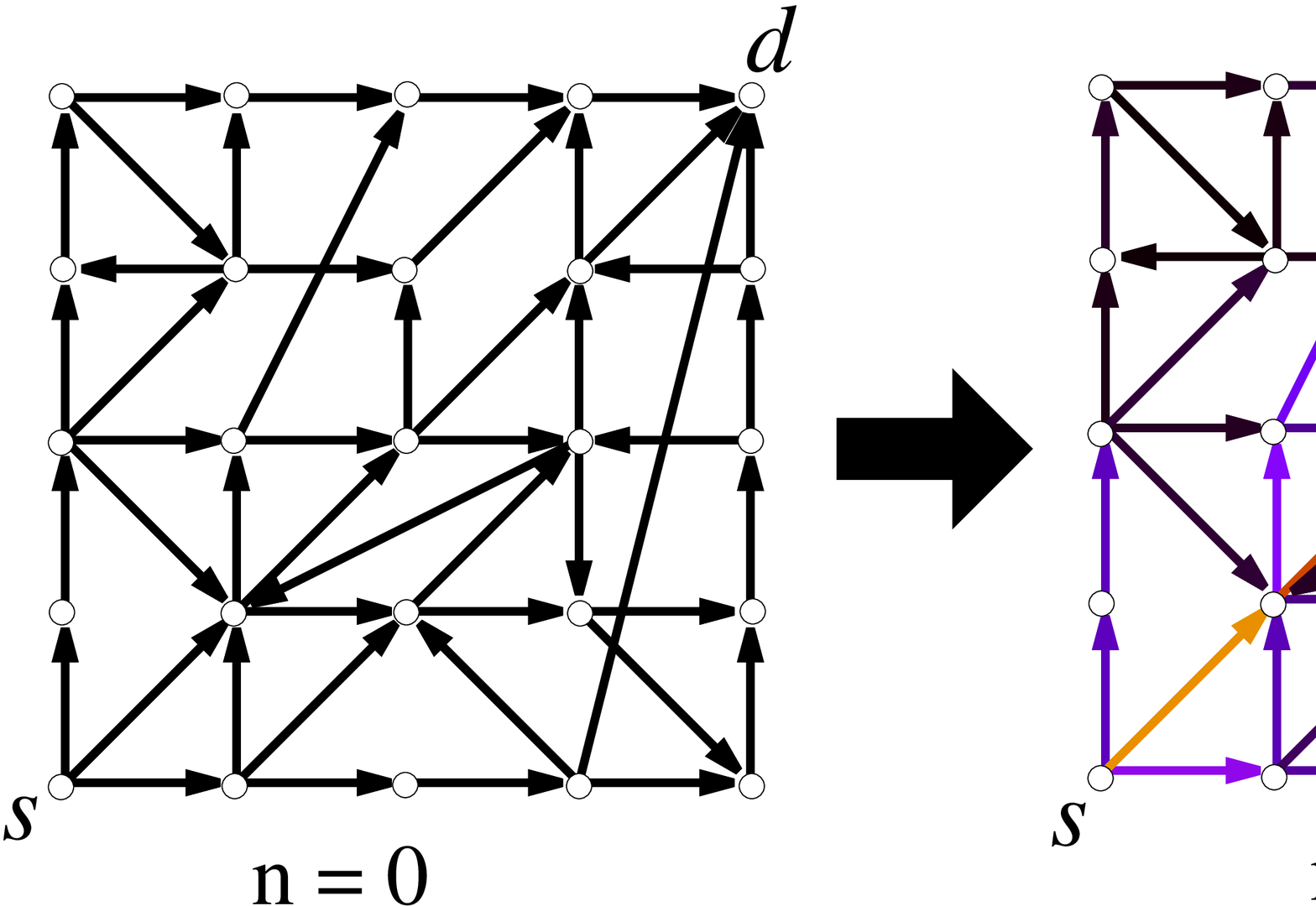}
\caption{\small \label{fig:netsci} From random walks to short walks:  (a) At $n = 0$, all weights are identical; (b) at $n = 10^3$ an edge weight structure starts to emerge along shorter paths; (c) at $n = 10^6$ edge weights along the (two) shortest paths are dominant.}
\end{center}
\vspace{-2mm}
\end{figure*}
We start by considering a 25-nodes network containing a few loops, evolving under the multiple reward model.
Nodes are arranged in a 5x5 grid, with the source located at the bottom left corner and the destination 
at the top right corner. The initial weight on any edge is 1, and the reward function is $f(L) = 1/L$. 
Figure \ref{fig:netsci} (in color, better seen on screen) shows three snapshots
of one system run, at times $n = 0$, $n = 10^3$, $n = 10^6$, where magnitude of edge weights is converted 
into a color code according to a heat-like palette.
We observe that, by time $n = 10^6$, edge weights along the two shortest paths are dominant.
Note that one shortest path (along the diagonal) appears to be stronger than the other (i.e., 
more likely to be used) but this changes from one run to another, since the asymptotic probability 
to use a specific shortest path is a random variable (recall Theorem \ref{maintheo}).    
 
\subsection{Non-monotonous behavior of random walks}
Interestingly, although edge weights increase monotonically,
normalized edge weights (which are the quantities actually steering the random walk 
through the network) can exhibit non-monotonous behavior, even when we
consider their expected values (across system runs).   
We illustrate this on the simple topology of Fig.~\ref{fig:twopoint}, using 
segment lengths $l_1 = 7$, $l_2 = l_3 = 3$, $l_4 = 18$.
Fig. \ref{fig:disturbing} shows the transient of the normalized
weights of the four outgoing edges, comparing simulation results 
(obtained averaging 1,000 runs) and the analytical approximation 
based on the recursive approach \equaref{weird}.
Here initial weights are equal to 1, $f(L) = L^{-2}$. 

\tgifeps{7}{disturbing}{Transient behavior of normalized weights 
in a simple topology with two decision points. Comparison between
simulation and analytical approximation based on the recursive approach.} 

Besides showing the surprising accuracy of the recursive 
approximation, the plot in Fig. \ref{fig:disturbing} confirms
that normalized edge weights can be non-monotonous (see $r_1$ or $r_2$).
Note that this might appear to contradict a fundamental result that we have
obtained while proving Lemma \ref{lem1} (or Lemma \ref{lem2}), namely, the fact
that the (average) normalized weight of a $\beta$-edge competing against an $\alpha^*$-edge
in non-increasing. However, this result cannot be applied to the first decision 
point (the source $s$) since the assumptions of Lemma \ref{lem1} do not hold here 
(there are no $\alpha^*$-edges going out of $s$). 

\subsection{Trade-off between exploration and convergence}
What happens when we change the reward function $f(L)$? How is the transient
of a network affected by taking a reward function that decreases faster or slower
with the hop count? We investigate this issue in the case of a single decision point,  
considering the family of reward functions $f(L) = L^{-\phi}$
where we vary the exponent $\phi > 0$.

\tgifeps{7}{decayposter}{Transient behavior of a single decision between 
two paths of length 10 and 11, update function $L^{-\phi}$, initial
weights 1.} 

We consider a simple topology in which the source is connected to the destination
by two edge-independent paths of length 10 and 11. The first edges
of these paths have weights $w_1[n]$ and $w_2[n]$, respectively, with
initial value 1.
Figure \ref{fig:decayposter} shows the transient behavior
of the average normalized weight $\E[w_2[n]/(w_1[n]+w_2[n])]$,
for three different values of $\phi = 0.5$, 2, 4, comparing simulation 
results (averaging 1,000 runs) with our analytical
approximation \equaref{finalapprox2}.
 
We observe an interesting trade-off between exploration and convergence. 
Note the role of the threshold $n^*$ introduced in Section \ref{subsec:transingle},
here equal to $n^* = \frac{w_1[0]}{\Delta_1} = 10^\phi$.
Thus, the duration of the exploration phase grows exponentially with $\phi$.  
Moreover, the exponent of the asymptotic power law decay \equaref{powerlaw}
is equal to $\Delta_2/\Delta_1-1 = (10/11)^\phi -1$. Thus, larger $\phi$ leads 
to larger exponent and thus faster convergence. Therefore, as $\phi$ increases 
(corresponding to a reward function that decreases much more rapidly with 
the hop count), convergence is asymptotically faster but exploration requires 
much more time. Intuitively, reward functions that decay too fast with path lengths
require many repetitions of the WRW before some initial structure emerges. However, 
when a structure does emerge, they will quickly drive the WRW deeper and deeper into it.


\subsection{Slowing-down clocks}\label{subsec:clocks}
Consider the network illustrated in Figure \ref{fig:multiple},
comprising of an long sequence of decision points indexed by $1,2,\ldots$.
Each decision point is directly connected to the destination $d$ and to 
the next decision point in the sequence. The source coincides with node 1.

This scenario provides interesting insights into the impact of slowing-down 
clocks on the evolution of the system structure, and will also illustrate
the calculation of the asymptotic power-law decay of $\beta$-edges.

\tgifeps{4}{multiple}{Network with multiple decision points in sequence.}

Note that, asymptotically, random walks reaching decision point $i$
will end up going directly to $d$ instead of the next decision point. 
This means that, asymptotically, each decision point can be studied in 
isolation, considering two outgoing edges: an $\alpha^*$-edge belonging to 
a path of length $i$, and a $\beta^*$-edge belonging to a path of length $i+1$.

Following this reasoning, we can iteratively compute the power law decay of 
all $\beta^*$-edges, along with the scaling law for the clocks of the 
respective decision points, using the formulas introduced in Section \ref{subsec:tranlaws}.
Let $e_{\beta}^i$ be the scaling exponent of the outgoing $\beta^*$-edge of node $i$, 
and $e_{c}^i$ be the scaling exponent of node $i$'s clock.
Using \equaref{powerlaw} we have $e_{\beta}^1 = \frac{f(2)}{f(1)}-1$,
and from \equaref{clock} we get $e_{c}^2 = \frac{f(2)}{f(1)}$.
Subsequently, we can derive  $e_{\beta}^2 = \frac{f(2)}{f(1)}\left( \frac{f(3)}{f(2)} -1 \right) = \frac{f(3)-f(2)}{f(1)}$
and from this obtain $e_{c}^3 = 1 + \frac{f(3)-f(2)}{f(1)}$.
We then have $e_{\beta}^3 = \frac{(f(1)+f(3)-f(2))(f(4)-f(3))}{f(1)f(3)}$, 
and so on\footnote{We lack a general closed-form expression for $e_{\beta}^i$
or $e_{c}^i$.}.   

\tgifeps{6.5}{clockexp}{Clocks associated to the first four decision points
of the network in Fig. \ref{fig:multiple}.}

\tgifeps{6.5}{decayexp}{Normalized weight of $\beta$-edges going out of the first four 
decision points of the network in Fig. \ref{fig:multiple}.}

In Figures \ref{fig:clockexp} and \ref{fig:decayexp} we compare simulation and 
analytical results for the first four decision points shown in Fig. \ref{fig:multiple}, 
considering initial weights equal to 1, $f(L) = L^{-1}$. 
Analytical predictions for the power-law exponents are represented  
by segments placed above the corresponding simulation curve (note the log-log scale).
Besides showing the accuracy of the analytical prediction, results in Fig.
\ref{fig:clockexp} and \ref{fig:decayexp} illustrates an important
fact: the network structure (i.e., weights on edges) is left 
essentially unmodified as we move away from the shortest path. 
Note that this is not quite evident from the math, which predicts that the 
clock of any decision point in the sequence diverges. 
In practice, clocks of decision points sufficiently far from the shortest 
paths evolves so slowly that we can essentially ignore
the perturbations caused by the random walks. 
Hence, sufficiently far regions of a large network preserve their
initial \lq plasticity', allowing them to be used for other purposes. 

\subsection{Transient analysis of the complete graph}\label{subsec:complete} 
As a final interesting case, we consider the complete graph with $m$ nodes, where 
the shortest path has length 1 and every node has cycles of all lengths. 
Without lack of generality, let the source and destination correspond to 
nodes 1 and $m$, respectively.

The asymptotic decay exponent of $\beta$-edges can be easily computed
following the approach in Section \ref{subsec:tranlaws}.
Due to symmetry, there are essentially two types of decision point to analyze:
the source node (node 1), and any other node different from the source 
and the destination, for example, node 2.

Node 1 will have, asymptotically, one surviving edge traversed by the unique shortest path of length 1, 
and $m-2$ decaying $\beta$-edges traversed by paths whose average length will converge to 2.
Hence, the decay exponent of any $\beta$-edge going out of $s$, such as edge $(1,2)$, 
is $\frac{f(2)}{f(1)}-1$. The clock of decision point 2 will then run 
with scaling exponent $\frac{f(2)}{f(1)}$.

Node 2 will have, asymptotically, one surviving edge traversed by paths of length 2, 
and $(m-2)$ decaying edges traversed by paths of average length tending to 3. 
Each of them will decay with power law exponent $\frac{f(3)-f(2)}{f(1)}$.

For the complete graph, we have also run the recursive method introduced
in Section \ref{subsec:trangen}, which required us to numerically solve, at each time step,
the transient behavior of different discrete-time Markov chains with structure
similar to that of the complete graph, modified by the introduction of 
proper absorbing states to obtain the path length distributions needed by 
the recursive formulas.

The recursive approach provides a more detailed prediction of the system behavior (at the cost
of higher computational complexity).
In particular, it allows to distinguish, among the $(m-2)$ $\beta$-edges going out
of decision point 2, the special case of edge $(2,1)$. Intuitively, such edge will accumulate
more weight than the $\beta$-edge connecting node 2 to, say, node 3, 
because node 1 is a different decision point with respect to all other decision points (in particular,
it has the smallest average residual path length to reach the destination).
Considering the completely symmetric structure of the rest of the graph,
it turns out that there are, essentially, 5 types of edges having different 
transient behavior\footnote{to avoid complex notation, we take decision point 2 as 
representative of any decision point different from 1, and decision point 3 as
representative of any decision point other than 1 and 2.}:
1) the $\alpha$-edge $(1,m)$; 2) the $\beta$-edge $(1,2)$; 3) the $\alpha$-edge $(2,m)$; 4) the $\beta$-edge $(2,1)$;
4) the $\beta$-edge $(2,3)$;

The results obtained by the recursive method are compared against simulations in Figure \ref{fig:decay50},
for $m = 50$ nodes, initial weights equal to 1, $f(L) = L^{-1}$, and single reward model.
Besides confirming the surprising accuracy of the recursive approximation,
results in Figure \ref{fig:decay50} suggest that, except for the outgoing edges of 
the source node, all other edges are marginally affected by the reinforcement process. 
Indeed, there are so many edges in this network (i.e., available structure) that 
the \lq perturbation' necessary to discover and consolidate the shortest path between two particular nodes
practically does not significantly affect any edge which is not directly connected to the source.

\tgifeps{7}{decay50}{Transient behavior of the complete graph with 50 nodes, comparing simulation and
recursive approximation, for 5 different edge types.}

\section{Conclusion and future work}\label{sec:conclusion}
Focusing on the important problem of network navigation, 
we have introduced and analyzed a novel, simple model capturing the 
co-evolution of network structure and function performance. 
We have shown how the repetition of a simple network
activity process (WRW with edge reinforcement) is able to build 
over time a network structure that always leads itself to navigate through 
shortest paths, in a surprisingly robust manner. Many variations and 
extensions of the proposed model are possible, which could shed 
light on how information is efficiently found and/or stored in 
biological systems lacking the computational and storage resources 
required to run sophisticated routing algorithms.

\begin{appendices}

\section{Complement to the proof of Lemma \ref{lem2}}\label{app:compl}
The algebraic property related to \equaref{lem2} that we need to prove
holds even if the reward function is constant. Therefore, we will
prove it under the assumption that $\Delta(i) = C$, $\forall i$.
The proof can be extended to a general non-increasing reward function
using the sequence of functions $\{f_k(\cdot)\}_k$ and the iterative approach 
introduced in Section \ref{subsec:gen}, but we omit this extension 
here. 
Moreover, we will first consider the simpler case in which $q = 0$.  
Introducing the following normalized variables: $\alpha = \frac{\dot{w}}{\hat{w}}$ and $\sigma = \frac{C}{\hat{w}}$,
we essentially need to show that 
$$ \frac{1}{\alpha} \sum_{i=0}^{\infty} \left(\frac{\alpha}{\alpha+1}\right)^i
\left[\frac{\alpha + i \sigma}{\alpha+1+(i+1)\sigma}\right] \leq 1$$
for any $\alpha > 0$ and $\sigma > 0$.
Observe that the above expression is exactly equal to 1 for $\sigma = 0$. 
We make another change of variable, introducing
\mbox{$x = \frac{\alpha+1}{\sigma}$}. After some simple algebra,
our target reduces to show that:
$$ g(\alpha,x) = \frac{x+\alpha+1}{\alpha} \sum_{k=x+1}^{\infty} \left(\frac{\alpha}{\alpha+1}\right)^{k-x}
\frac{1}{k} \geq 1$$   
for any $\alpha > 0$ and $x > 0$. 
We know that, $\forall \alpha > 0$, $\lim_{x \to \infty} g(\alpha,x) = 1$. 
Therefore, for arbitrarily small $\epsilon > 0$, there exists an $x_\epsilon$ such that, for $x > x_\epsilon$,
$g(\alpha,x) \geq 1 - \epsilon$.

We can show that, if $g(\alpha,x) \geq 1-\epsilon$, than $g(\alpha,x-1) > 1-\epsilon$, for any $\alpha > 0$
and $\epsilon \geq 0$.
Indeed, we have:
\begin{eqnarray*} 
g(\alpha,x-1) &&= \frac{x+\alpha}{\alpha} \sum_{k=x}^{\infty} \left(\frac{\alpha}{\alpha+1}\right)^{k-x+1} \frac{1}{k} \\ 
&&= \frac{x+\alpha}{\alpha} \frac{\alpha}{\alpha+1}
\sum_{k=x}^{\infty} \left(\frac{\alpha}{\alpha+1}\right)^{k-x}
\frac{1}{k} \\ 
&&= \frac{x+\alpha}{\alpha+1}\left[g(\alpha,x)
\frac{\alpha}{x+\alpha+1}+\frac{1}{x}\right] \\ 
&&\geq (1-\epsilon) \frac{x+\alpha}{x+\alpha+1}\frac{\alpha}{\alpha+1}
+ \frac{1}{x}\frac{x+\alpha}{\alpha+1} > 1-\epsilon
\end{eqnarray*}
as can be easily checked.
Note that, by recursion, if $g(\alpha,x) \geq 1- \epsilon$, than
$g(\alpha,x-m) > 1-\epsilon$ for any $m$ such that $x-m > 0$.
Armed with this result, we can easily prove that $g(\alpha,x) > 1$ for any $x > 0$.
Indeed, suppose, by contradiction, that $g(\alpha,x_\sigma) < 1$ at a given point $x_\sigma$.
Then we can write $g(\alpha,x_\sigma) = 1 - 2 \epsilon$.
Now, we build a sequence of values $\{x_\sigma+1, x_\sigma+2, \ldots, x_\sigma+m, \ldots\}$ which eventually 
enters the stripe $[1-\epsilon,1+\epsilon]$, since $\lim_{x \to \infty} g(\alpha,x) = 1$.
Therefore there exists a sufficiently large $m \geq 1$ such that $g(\alpha,x_\sigma + m) > 1-\epsilon$.
But then it must be $g(\alpha,x_\sigma) > 1-\epsilon$,
which contradicts the hypothesis that $g(\alpha,x_\sigma) = 1 - 2 \epsilon$.
It remains to prove that $g(\alpha,x)$ cannot be identically equal to 1 at all points $x > 0$. 
Again, this can be proven by contradiction: suppose that $g(\alpha,x) = 1$ at any $x$.
Considering a generic point $x_\sigma + 1$, $g(\alpha,x_\sigma+1) = 1$ implies that $g(\alpha,x_\sigma) > 1$,
which contradicts the hypothesis.  
The more general case in which $q > 0$ can be treated essentially in the same way,
but requires more tedious algebra. In this case, we need to show that:
\begin{equation*}
g(\alpha,x,q) = \sum_{k=x+1}^{\infty} \left[\frac{\alpha (1-q)}{\alpha+1}\right]^{k-x} \frac{1}{k}
\left[ \frac{\alpha+1+x}{\alpha (1-q)} + x q \alpha \right] \geq 1 - \alpha q 
\end{equation*}
Evaluating the expression of $g(\alpha,x-1,q)$, one can again show that,
if $g(\alpha,x,q) \geq 1-\epsilon$, then $g(\alpha,x-1,q) > 1-\epsilon$, for any $\alpha > 0$,
$q \geq 0$, $\epsilon \geq 0$, and repeat the arguments
adopted in the case $q = 0$.

\end{appendices}

\bibliographystyle{plain}
\bibliography{main}

\end{document}